\documentclass[lettersize,journal]{IEEEtran}
\usepackage{amsmath,amsfonts}
\usepackage{algorithmic}
\usepackage{array}
\usepackage{textcomp}
\usepackage{stfloats}
\usepackage{url}
\usepackage{verbatim}
\usepackage{graphicx}
\usepackage{xcolor}
\usepackage{soul}
\usepackage[hidelinks]{hyperref}
\usepackage[utf8]{inputenc}
\usepackage{amsmath}
\usepackage{subcaption}
\usepackage{amsthm}
\usepackage{booktabs}
\usepackage{algorithm}
\usepackage[switch]{lineno}
\usepackage{cite}
\usepackage{bbding}

\newtheorem{theorem}{Theorem}
\newtheorem{definition}{Definition} 

\def\BibTeX{{\rm B\kern-.05em{\sc i\kern-.025em b}\kern-.08em
    T\kern-.1667em\lower.7ex\hbox{E}\kern-.125emX}}
\usepackage{balance}
\begin{document}
\title{Adaptive $k$ Nearest Neighbors Classifier via Granular Ball Computing}
\author{Xiaoyu Lian, Shuyin Xia*, Hongxuan He, Lifeng Shen, Guoyin Wang, Xinbo Gao
\thanks{X. Lian, S. Xia, H. He, L. Shen and X. Gao are with the Chongqing Key Laboratory of Computational Intelligence, Key Laboratory of Cyberspace Big Data Intelligent Security, Ministry of Education, Sichuan-Chongqing Co-construction Key Laboratory of Digital Economy Intelligence and Key Laboratory of Big Data Intelligent Computing, Chongqing University of Posts and Telecommunications, 400065, Chongqing, China. E-mail: lianxiaoyu724@qq.com, xiasy@cqupt.edu.cn, hhxuan0726@163.com, shenlf@cqupt.edu.cn, gaoxb@cqupt.edu.cn.}

\thanks{G. Wang are with the College of Computer and Information Science, the National Center for Applied Mathematics in Chongqing, Chongqing Normal University, Chongqing 401331, China. E-mail: wanggy@cqnu.edu.cn.}
}


\maketitle

\begin{abstract}
The $k$-Nearest Neighbor~(KNN) algorithm is widely used across various tasks. 
The selection of the $k$ value is a key issue because it significantly impacts performance.
In this paper, an adaptive and efficient KNN approach via granular-ball computing is proposed. 
The method consists of two stages. \textcolor{black}{In the training stage, the dataset is first coarsely partitioned to reduce the complexity of data distributions within a granular ball, and then the Fisher criterion is introduced to control ball splitting and stopping, yielding a multi-granularity granular ball representation.
In the prediction stage, the nearest granular ball is first located through a weighted distance mechanism, and an adaptive neighborhood is then constructed around the test sample. The effective $k$ value is dynamically determined by the actual number of samples contained in this neighborhood. The neighborhood induced by the nearest granular ball provides more stable local group information, thereby improving robustness against noise and local perturbations.} Experimental results demonstrate that the proposed method outperforms existing KNN variants across multiple datasets in terms of both accuracy and efficiency. 
The code has been open-sourced for reproducibility: https://github.com/lianxiaoyu724/Adaptive-GBKNN.

\end{abstract}

\begin{IEEEkeywords}
KNN, Granular-Ball Computing, \textcolor{black}{Granular Ball} Splitting, Classification, Adaptivity.
\end{IEEEkeywords}

\section{Introduction}
\IEEEPARstart{T}{he} $k$-nearest neighbors~(KNN) algorithm is a classic supervised learning algorithm~\cite{soucy2001simple}. 
It computes Euclidean distances from the test instance to all training samples, subsequently identifying the $k$ nearest neighbors for final classification through majority voting on their class labels.
It is widely applied in applications such as speech recognition~\cite{paliwal1983application}, pattern recognition~\cite{zhang2017high,wani1handwritten,ali2022intelligent,zhang2024hierarchical,ab2021efficientnet}, classification and clustering tasks~\cite{heo2018distance, samet2007k, zhang2004fast, kim1986fast, goin1984classification}.
However, the algorithm still has two limitations. In high-dimensional and large-scale datasets, the cost of distance computation and sorting increases sharply, leading to unacceptable computational complexity~\cite{ukey2023survey}. Moreover, the choice of $k$ affects classification accuracy: a small $k$ may cause overfitting, while a large $k$ tends to include noisy samples, thereby compromising the robustness of the model.

\begin{figure}[t] 
	\centering 
	\includegraphics[width=0.45\textwidth]{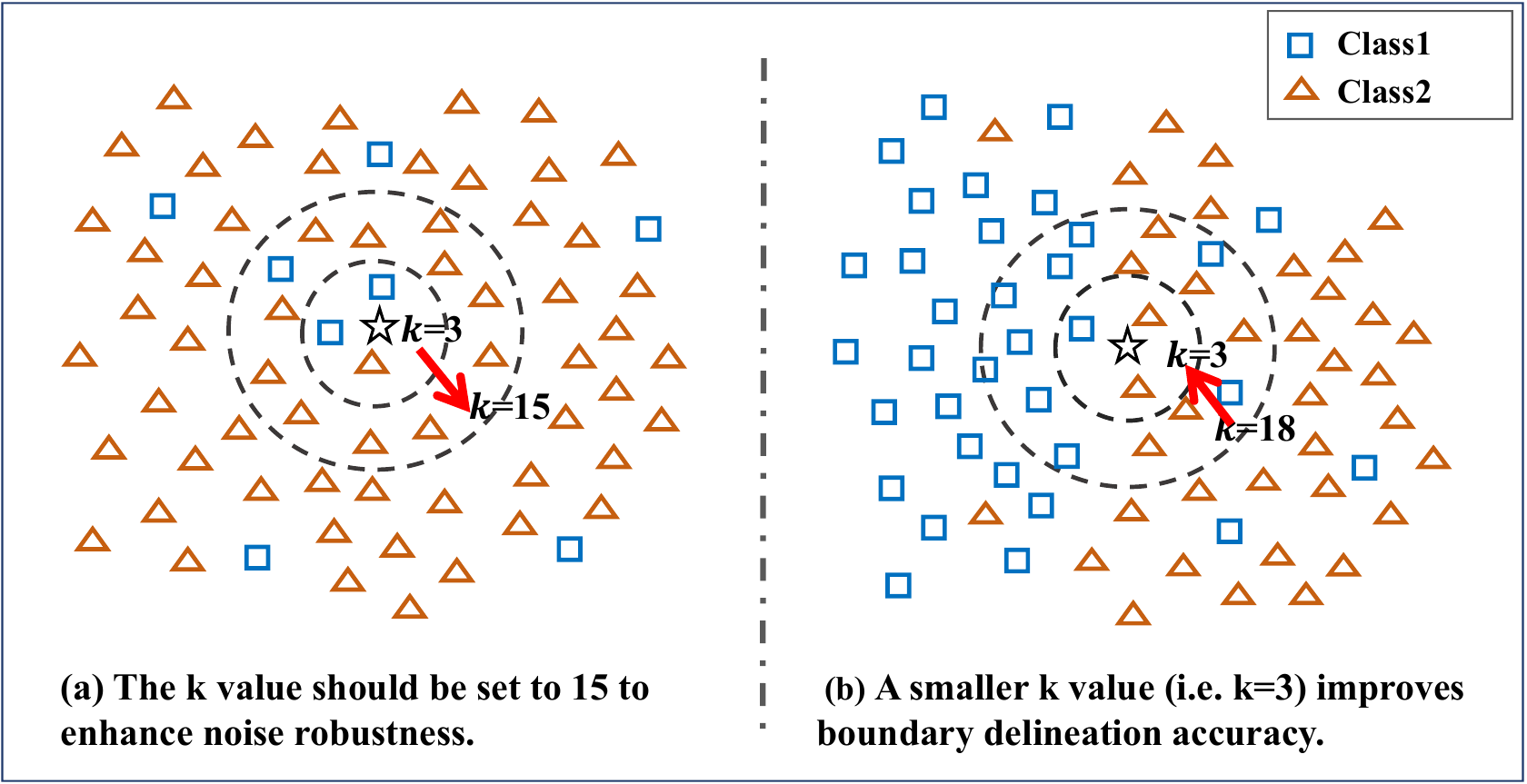} 
	\caption{The effectiveness of k selection on KNN classification, in which the test sample should be classified to Class 2.} 
	\label{knn}
\end{figure}

The selection of the $k$ value of the KNN algorithm plays a critical role in determining classification accuracy. As shown in Fig.~\ref{knn}(a), within the local distribution of Class 2, a few Class 1 samples can be regarded as noise. Setting $k=3$ may cause the test sample to be misclassified as ``Class 1.'' Therefore, in the presence of noise, a larger $k$~(e.g., $k=15$) should be selected to enhance model robustness. Conversely, when the test sample lies near the decision boundary, as shown in Fig.~\ref{knn}(b), a smaller $k$ is preferable to better capture local structural characteristics and improve classification precision. Currently, improved KNN algorithms that estimate the optimal $k$ value~\cite{ghosh2006optimum} or use distance metrics~\cite{domeniconi2002locally,bhattacharya2012affinity} have been widely applied in various research fields. The concept of neighborhood search has become a significant development direction for KNN. In 2007, Yiu et al. extended the traditional reverse nearest neighbor query by introducing the concept of reverse $k$-nearest neighbor~(RKNN) search~\cite{yiu2007reverse}. However, like traditional KNN, RKNN is also susceptible to the choice of $k$. The core challenge lies in effectively detecting the neighborhood structure for different datasets, which is especially difficult when the data characteristics are unknown.

Regarding this challenge, the natural neighbor of an interesting idea was proposed, which has triggered extensive research~\cite{pourbahrami2022geometric}.
Natural neighbors represent a scale-free neighbor relationship that better reflects the true characteristics of the data. Two data points, $x$ and $y$, form a natural neighbor relationship if $x$ considers $y$ as a neighbor and $y$ likewise considers $x$ as a neighbor~\cite{zhu2016natural}. 
If each object in a dataset can find its natural neighbor, the dataset reaches a stable state. 
\textcolor{black}{The natural neighborhood provides an adaptive alternative to the preset $k$ in traditional kNN. Nevertheless, methods built on natural neighborhoods tend to rely heavily on the local geometric structure of data points~\cite{yang2024gnan}.}
When noise or outliers are present in the data, these anomalous points can significantly alter the neighborhood structure, leading to a decline in the stability and accuracy of the classification results. Especially when the data exhibits high-dimensional and complex distributions~(e.g., non-convex shapes or multimodal distributions), the discriminative power of distance metrics decreases, and natural neighbor relationships may fail to accurately capture the boundary features between classes, negatively impacting classification performance.

\textcolor{black}{In recent years, some studies have introduced granular-ball computing into KNN classification. By replacing original sample points with granular balls and making decisions based on the nearest granular ball, these methods improve classification efficiency and robustness to a certain extent~\cite{xia2019granular,xia2022efficient}. 
However, existing methods mainly rely on purity-related functions during granular ball generation, while purity is insufficient to fully characterize the geometric structure of samples within a granular ball. In the classification stage, they usually make decisions directly based on a single nearest granular ball, lacking explicit modeling of the local neighborhood of the query sample. 
Although a study has further combined granular-ball computing with natural neighborhood methods (MGKNN)~\cite{xie2024mgnr}, their granular ball partitioning processes still mainly rely on unsupervised mechanisms and cannot fully exploit class supervision information. 
Therefore, how to adaptively generate better supervised granular ball representations and, based on them, construct an adaptive neighborhood decision mechanism for query samples remains a worthwhile problem for further study.}

To address the above limitations, this paper proposes an efficient and adaptive \textcolor{black}{granular ball} KNN~(GBKNN) algorithm based on granular-ball computing. The algorithm initially simplifies the data distribution within each \textcolor{black}{granular balls} by splitting the original dataset containing $n$ samples into $\sqrt{n}$ \textcolor{black}{granular balls}. 
\textcolor{black}{The Fisher criterion is jointly introduced to control the adaptive generation of granular balls. 
Subsequently, an adaptive local neighborhood is constructed for the test sample according to its spatial relationship with the nearest granular ball, thereby adaptively determining the effective $k$ value.} Specifically, the contributions of this study include:
\begin{itemize}
    \item \textcolor{black}{A highly efficient and fully adaptive granular ball generation method is proposed. The method first partitions the data at a coarse granularity to reduce the complexity of data distribution in a ball, and then uses the Fisher criterion to guide ball splitting, thereby establishing an adaptive generation framework of granular balls that range from coarse to fine.}
    \item \textcolor{black}{A KNN method that constructs adaptive neighborhood relationships based on granular ball structures is proposed. Instead of classifying directly by the nearest granular ball, the method uses the nearest granular ball as a reference to build a local adaptive neighborhood for the test sample, and adaptively determines $k$ by the actual number of samples contained in that neighborhood.}
    \item \textcolor{black}{Experimental results on multiple benchmark datasets show that GBKNN achieves superior overall performance in terms of classification accuracy, robustness, and time efficiency compared with several competing methods.}
\end{itemize}

The remainder of this paper is organized as follows: Section~\ref{sec:related work} provides an overview of the evolution of KNN techniques, along with a discussion of relevant studies on granular-ball computing. In Section~\ref{sec3}, a fully adaptive method for generating \textcolor{black}{granular balls} is introduced, followed by the design of an adaptive GBKNN algorithm built upon it. Section~\ref{sec:experiment} reports the experimental results and delivers a comprehensive analysis of the algorithm's performance. Finally, Section~\ref{sec6} concludes the paper and highlights promising directions for future exploration.

\section{Related Work}\label{sec:related work}

\subsection{KNN Classification Method}
Neighborhood-based KNN algorithms have been extensively developed across various domains by integrating diverse strategies to overcome the limitations of traditional KNN~\cite{huang2023lightweight,9835174,gong2022joint}. 
These developments can be broadly categorized into three methodological directions. 

First, some methods aim to reconstruct neighborhood structures to better reflect the true data manifold.
To address the rigidity of fixed nearest-neighbor definitions, Yiu et al.~\cite{yiu2007reverse} introduced the reverse KNN (RKNN) model, leveraging reverse nearest neighbor search to enhance query efficiency. Similarly, Brito et al.~\cite{brito1997connectivity} proposed the mutual $k$-nearest neighbor (MKNN) model, which constructs mutual nearest neighbor graphs to improve robustness and reduce computational complexity for large datasets. Further, Zhu et al.~\cite{zhu2016natural} developed the natural neighbor (NaNEKNN) method, which eliminates the need for manually choosing the $k$ parameter by identifying neighbors based on mutual proximity in the dataset. These methods redefine how neighborhoods are formed, providing a foundation for more adaptive KNN variants.


Second, adaptive strategies are employed to determine neighborhood sizes based on local density or distribution. Mullick et al.~\cite{mullick2018adaptive} introduced the adaptive KNN (AdaKNN), which utilizes local density and distribution patterns, integrated with neural networks, to improve classification performance. Similarly, OneStepKNN~\cite{zhang2021KNN}, parameterless KNN (PLKNN)\cite{jodas2022pl}, and Dr.k-NN\cite{zhu2022distributionally} dynamically infer the optimal number of neighbors based on data characteristics, rather than relying on a pre-specified $k$ value. These methods effectively alleviate the sensitivity to $k$ and enhance KNN’s adaptability to complex data distributions.

Third, at the representation level, coarse-grained or structured approximations are used to enhance efficiency without significantly compromising accuracy. To address this, Ayyad et al.~\cite{ayyad2019gene} proposed a strategy that defines circular weighted regions around test samples, incorporating SMKNN and LMKNN variants to boost classification accuracy, stability, and efficiency in gene expression data. More recently, Xie et al.~\cite{xie2024mgnr} proposed the MGKNN algorithm, which leverages granular-ball computing to construct multi-granularity ball representations and neighborhood relationships while dynamically determining the $k$-value, thereby significantly improving the classification accuracy of KNN. However, this method still relies on the traditional KNN algorithm and does not fully utilize label information during the \textcolor{black}{granular ball} generation process, leaving room for further improvement.

\subsection{Granular-ball Computing}
Granular-ball computing is an important model in granular computing, whose core idea is to adaptively construct granular balls of varying sizes to cover the sample space, thereby driving the learning process through granular ball-based representations.
In terms of efficiency, the number of granular balls is significantly smaller than the number of samples, thus reducing computational overhead; in terms of robustness, the coarse-granularity nature of granular balls makes them less sensitive to sample noise; in terms of interpretability, the topological structure of granular balls provides multi-granularity descriptions. 
\textcolor{black}{For supervised datasets, granular ball generation is usually set to full coverage, i.e., the coverage ratio is 1, while ball compactness is typically ensured by the average radius. Given a dataset $D=\{x_i \mid  i=1,2,…,n\}$, a commonly used supervised granular ball representation model is defined as:
\begin{equation}
    \begin{split}
     &\min_{c_i,r_i,m}  m  \\
     & s.t. \ \ quality(GB_i) \ge \phi \left ({x_j}\right ),i=1,2,...,m,j=1,2,...,n.
    \end{split}
\end{equation}
where $m$ is the number of granular balls, and $c_i$ and $r_i$ denote the center and radius of the $i$-th granular ball, respectively. The quality of each granular ball, $quality(GB_i)$, is regulated by the constraint function $\phi_{x_j}$, thereby preventing the representation from degenerating into a few low-quality large balls. Hence, the key problem in granular ball generation is how to cover the sample space with as few granular balls as possible while maintaining high representation quality under the given constraints.}

\textcolor{black}{This model usually involves the joint optimization of the number, centers, radii, and quality constraints of granular balls, which leads to a highly non-convex optimization problem. Directly obtaining its global optimum is often computationally intractable. Therefore, inspired by the ``global-first'' strategy, this model is efficiently optimized and approximately solved through a coarse-to-fine granular ball splitting process with quality control.}
Initially, \textcolor{black}{granular ball} generation utilized the 2-means algorithm~\cite{xia2019granular}. The purity threshold needs to be optimized within a specific range to adapt to different datasets and address label noise, but this process is computationally expensive. To address this challenge, Xia et al. proposed a novel \textcolor{black}{granular ball} generation method by controlling the weighted purity sum of sub-\textcolor{black}{granular balls} and accelerating the splitting process using a division method~\cite{xia2022efficient}. However, this method still requires improvements in stability and efficiency. Subsequently, Xie et al.~\cite{xie2024gbg++} introduced the GBG~(GBG++) method, based on an attention mechanism, which significantly improves efficiency and robustness while achieving stability. As a novel approach, granular-ball computing has inspired various theoretical methods in machine learning, including \textcolor{black}{granular ball} classifiers~\cite{xia2024gbsvm}, \textcolor{black}{granular ball} clustering methods~\cite{xie2024w}, \textcolor{black}{granular ball} neural networks~\cite{xia2025graph}, \textcolor{black}{granular ball} rough sets~\cite{xia2023gbrs}, and \textcolor{black}{granular ball} reinforcement learning~\cite{liu2024unlock}. 
Meanwhile, granular-ball computing enhances efficiency and interpretability, bringing benefits that foster broader research and application potential~\cite{kuai2024never}.

\begin{figure*}[!ht]
    \centering
    \begin{subfigure}[b]{0.22\textwidth}
        \centering
        \includegraphics[width=\textwidth]{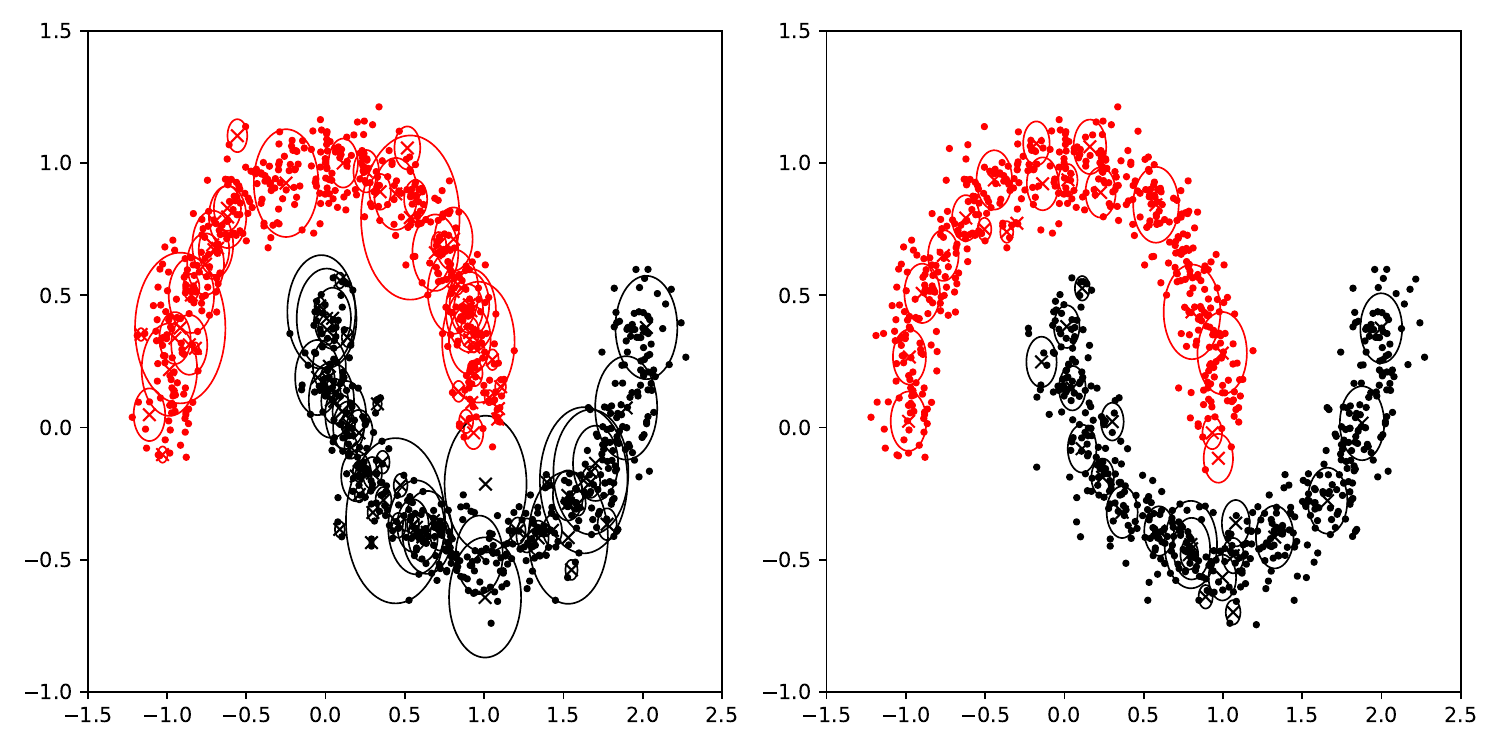}
        \caption{GBG++ on the `Two Moons' dataset.}
        \label{fig:image1}
    \end{subfigure}
    \hfill
    \begin{subfigure}[b]{0.22\textwidth}
        \centering
        \includegraphics[width=\textwidth]{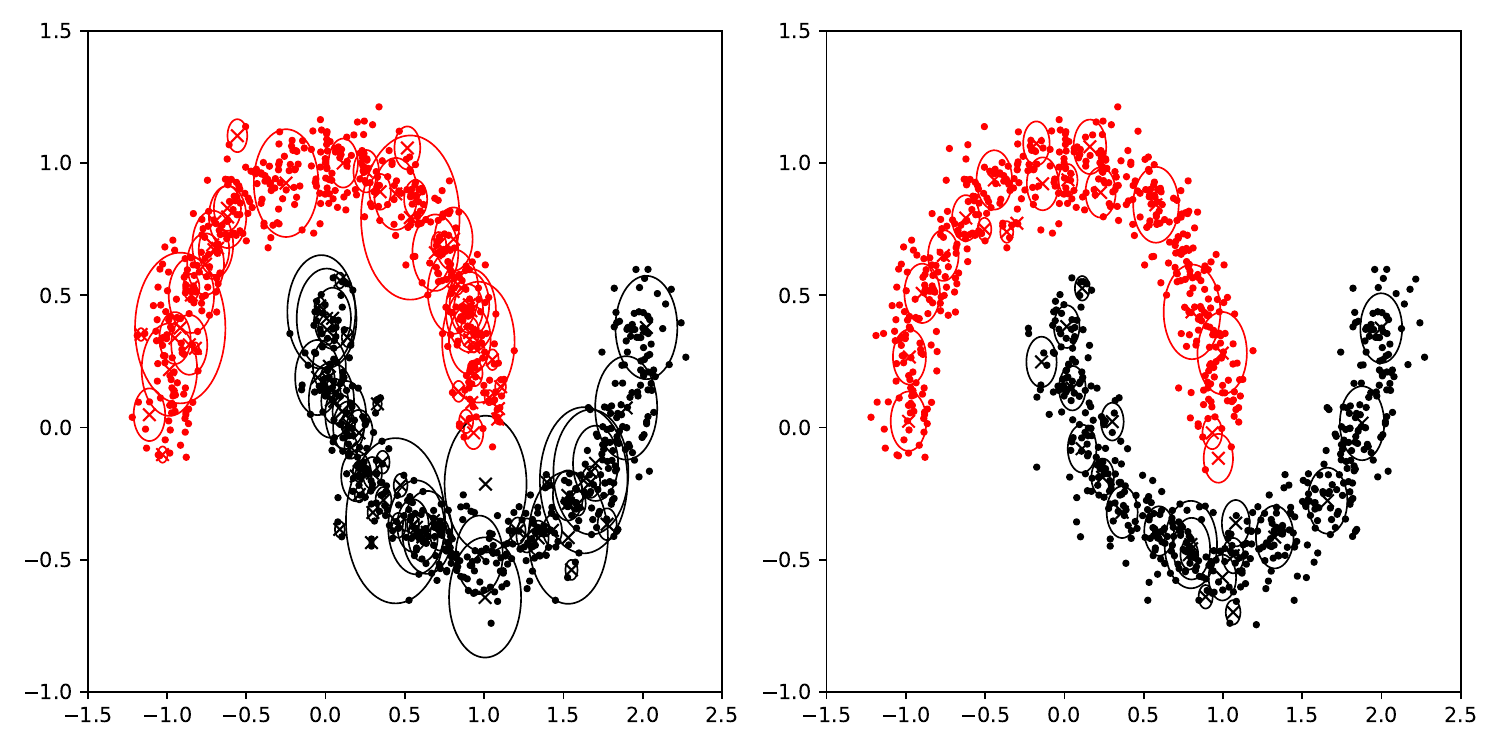}
        \caption{GBG++~(initial $\sqrt{n}$ split) on the `Two Moons' dataset.}
        \label{fig:image12}
    \end{subfigure}
    \hfill
        \begin{subfigure}[b]{0.22\textwidth}
        \centering
        \includegraphics[width=\textwidth]{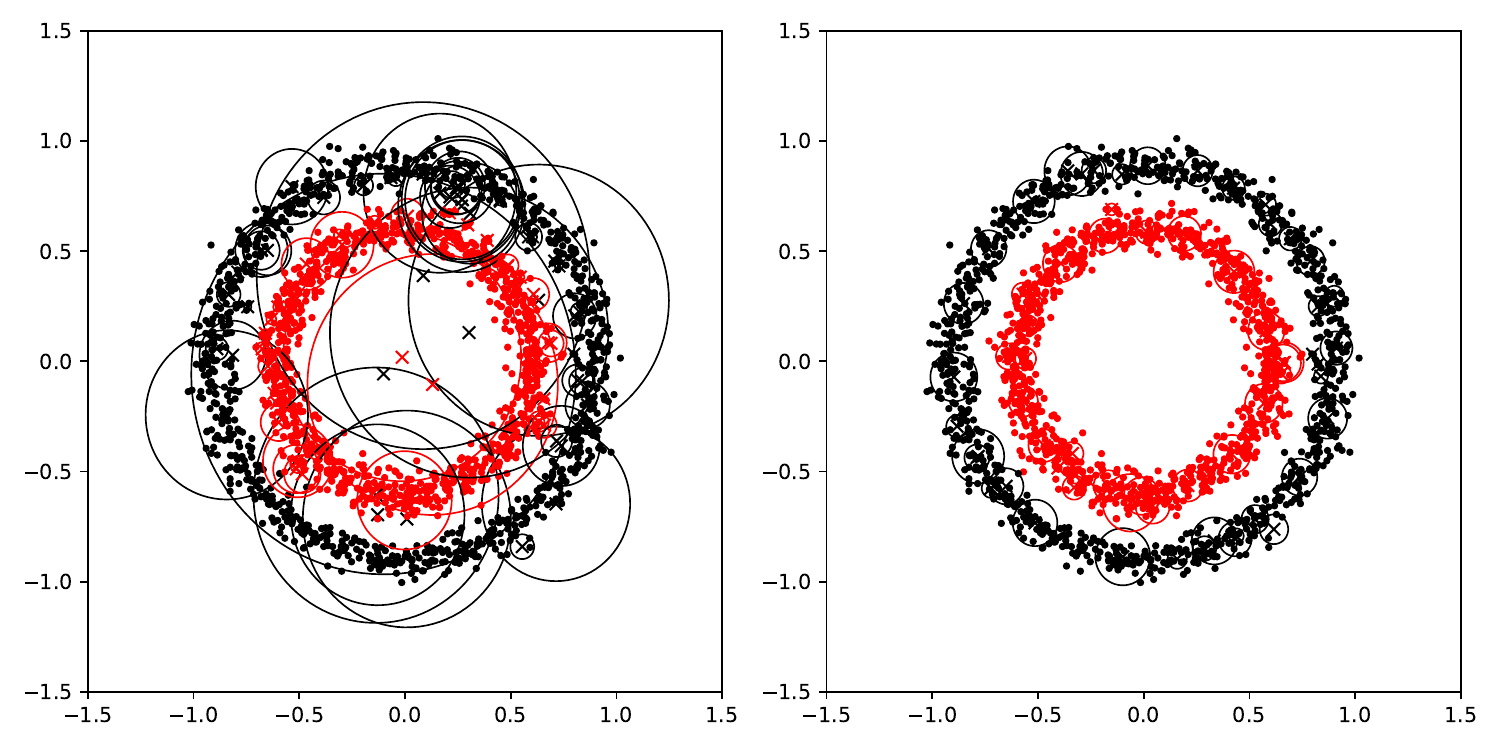}
        \caption{GBG++ on the `Circles' dataset.}
        \label{fig2}
    \end{subfigure}
    \hfill
    \begin{subfigure}[b]{0.22\textwidth}
        \centering
        \includegraphics[width=\textwidth]{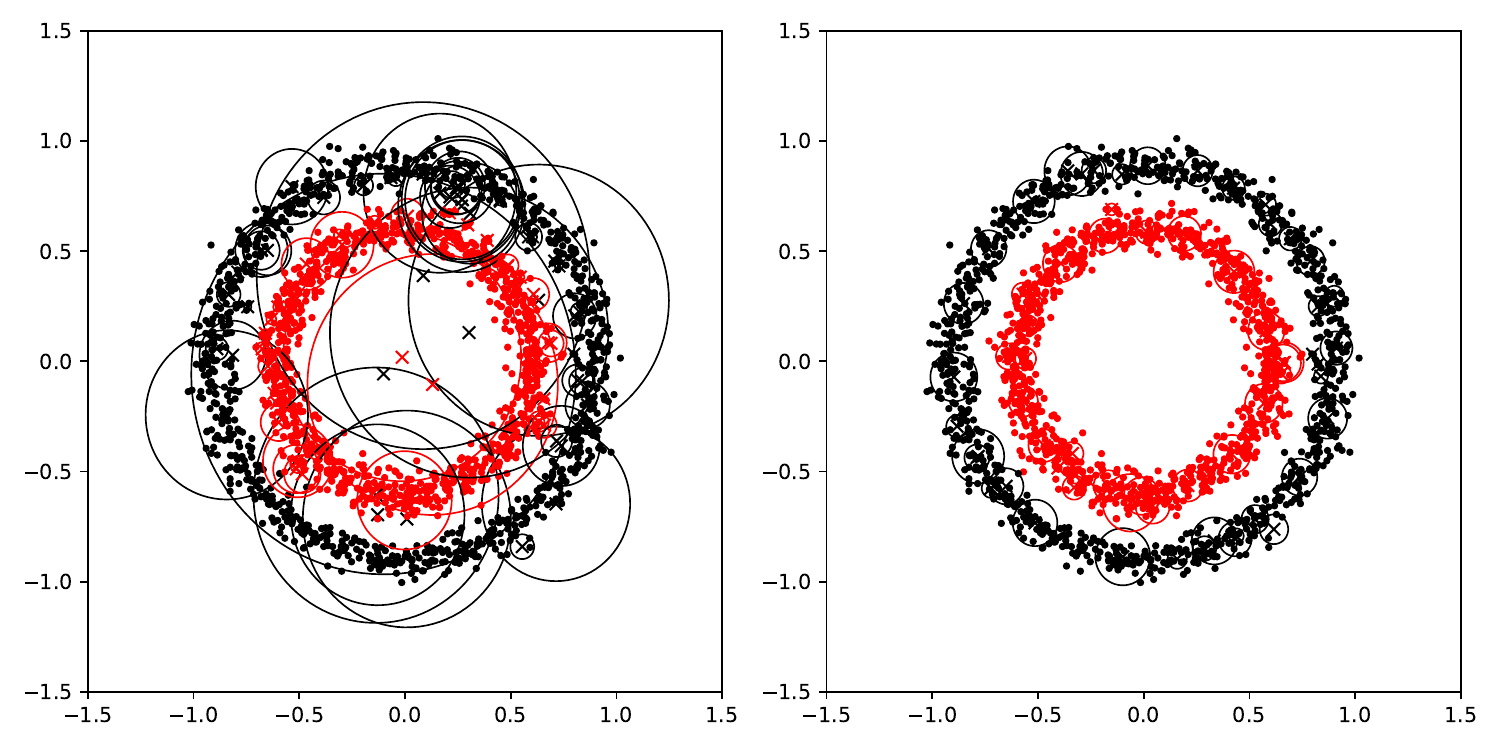}
        \caption{GBG++~(initial $\sqrt{n}$ split) on the `Circles' dataset.}
        \label{fig:image22}
    \end{subfigure}
    \caption{Comparison between GBG++ and GBG++ with initial $\sqrt{n}$ granular ball splitting on two non-linear datasets.}
    \label{sqrt(n)}
\end{figure*}

\section{Preliminaries}
Granular-ball computing takes granular balls as input units for learning. A \textcolor{black}{granular ball} is formally defined as follows~\cite{xia2019granular}.
\begin{definition}
Given a dataset $D=\{x_i, i=1,2,...,n\}$, a \textcolor{black}{granular ball} $GB$ split from $D$ is a spherical structure in the data space, represented by
\begin{equation}
	GB = \{ x_i | x_i \in D,  \Delta \textcolor{black}{\left( x_i - c  \right)  \le  r  \}},
\end{equation}
where \textcolor{black}{$\Delta \left( x_i - c  \right)$} denotes the distance between a sample $x_i$ and the center $c$. In this study, we adopt the Euclidean distance for computation. The center $c$ and radius $r$ of the \textcolor{black}{granular ball} are computed as follows:
\begin{equation}
	c = \frac{1 }{\left | GB \right |}  \sum_{x_i \in GB }  x_i, \ r = \frac{1}{\left | GB \right | }\sum_{x_i \in GB} \Delta \left( x_i-c \right),
\end{equation}
where $| \cdot |$ denotes the cardinality of the set, i.e., the number of elements in the \textcolor{black}{granular ball}.
\end{definition}
In classification problems, each \textcolor{black}{granular ball}, similar to a sample point, is assigned a class label, as defined below.
\begin{definition}\label{label}
Let $GB$ be a \textcolor{black}{granular ball}, and let $L = \{ l^j, j = 1, 2, \ldots, s \}$ denote the set of distinct class labels among its contained samples. 
The label of $GB$, denoted by $l_{GB}$, is defined as the label with the highest frequency among its samples:
\begin{equation}
	l_{GB} = \arg\max_{l^j \in L} \left| \{ x_i \in GB \mid l_{x_i} = l^j \} \right|,
\end{equation}
where $l_{x_i}$ is the label of sample $x_i$.
\end{definition}
According to Definition \ref{label}, the label of a \textcolor{black}{granular ball} is determined by the majority class of its contained samples, thus providing robustness to noise. The quality of a \textcolor{black}{granular ball} can be measured by its purity, defined as follows.
\begin{definition}
The purity of a \textcolor{black}{granular ball} $GB$, denoted by~$P$, measures the proportion of samples in $GB$ that share the \textcolor{black}{granular ball's} label $l_{GB}$:
\begin{equation}
	P = \frac{ \left| \{ x_i \in GB \mid l_{x_i} = l_{GB} \} \right| }{ \left| GB \right| }.
\end{equation}
\end{definition}

In subsequent work, Xie et al.~\cite{xie2024gbg++} introduced a splitting-based mechanism for \textcolor{black}{granular ball} generation, which begins with the entire dataset treated as a single \textcolor{black}{granular ball}. Specifically, the entire dataset is initially regarded as a \textcolor{black}{granular ball}, denoted by $GB$, which fails to meet the required quality criterion. The splitting process proceeds as follows:

Step 1: Compute the majority class label $l^{\ast }$ within the \textcolor{black}{granular ball} $GB$.
\begin{equation}\label{GBG++1}
l^{\ast } =arg \max_{l^{j }\in L} \sum_{i=1}^{n} \mathbb{I}(l_x = l^j),
\end{equation}
where $\mathbb{I}(\cdot)$ denotes the indicator function.

Step 2: Calculate the corresponding center $c_{l^\ast }$ and radius~$r_{l^\ast}$ based on the majority class samples.
To characterize the majority class region within the granular ball $GB$, first extract the subset of samples belonging to the majority class $l^\ast$ and compute its size:
\begin{equation}\label{GBG++2}
    GB_{l^\ast } = \left \{x_i \in GB \mid l_{x_i} = l^\ast  \right \}, \quad  
    N_{l^\ast } = \left | GB_{l^\ast } \right |.
\end{equation}
Then, calculate the center $c_{l^\ast }$ and radius $r_{l^\ast }$ of the majority class samples:
\begin{equation}\label{GBG++22}
    c_{l^\ast } = \frac{1}{N_{l^\ast }} \sum_{x_i \in GB_{l^\ast }}x_i, \quad  
    r_{l^\ast } = \frac{1}{N_{l^\ast }} \sum_{x_i \in GB_{l^\ast }} \Delta \left( x_i - c_{l^\ast }\right ).
\end{equation}
Here, $\Delta(\cdot)$ denotes the Euclidean distance.

Step 3: Construct a sub-\textcolor{black}{granular ball} using points within the computed radius.
\begin{equation}\label{GBG++3}
GB_k = \left \{ x_i \in GB \mid  \Delta \left( x_i-c_{l^\ast}  \right ) \le r_{l^\ast} \right \} .
\end{equation}
\vspace{-1em}
 
The iterative execution of Steps 1 to 3 continues until the number of remaining samples equals the number of class labels. This method has significantly improved the efficiency of \textcolor{black}{granular ball} generation. However, it exhibits limitations when dealing with complex nonlinear data structures, such as `Two Moons' or `Circles' datasets.

Specifically, the GBG++ method constructs each \textcolor{black}{granular ball} by calculating the center and radius based solely on the majority class samples, resulting in an approximately spherical shape. This geometric assumption implicitly presumes isotropic distributions, making it inadequate for fitting elongated or curved data structures. As a result, \textcolor{black}{granular balls} may mistakenly include structurally dissimilar samples or exclude similar ones, leading to structural errors. As illustrated in Fig.~\ref{sqrt(n)}(a) and Fig.~\ref{sqrt(n)}(c), such mismatches hinder the model’s ability to represent high-dimensional, non-uniform data spaces effectively. 
To address this issue, we propose generating $\sqrt{n}$ \textcolor{black}{granular balls} from the global dataset to enable a more appropriate partitioning at the global level, simplifying the internal distribution within each ball and mitigating errors caused by limited local fitting capacity. Furthermore, existing methods typically require repeated tuning of the purity threshold to accommodate different datasets and label noise, which not only relies heavily on manual effort but also incurs considerable computational cost.

\begin{figure*}[!ht]
	\centering
	\includegraphics[width=0.85\textwidth]{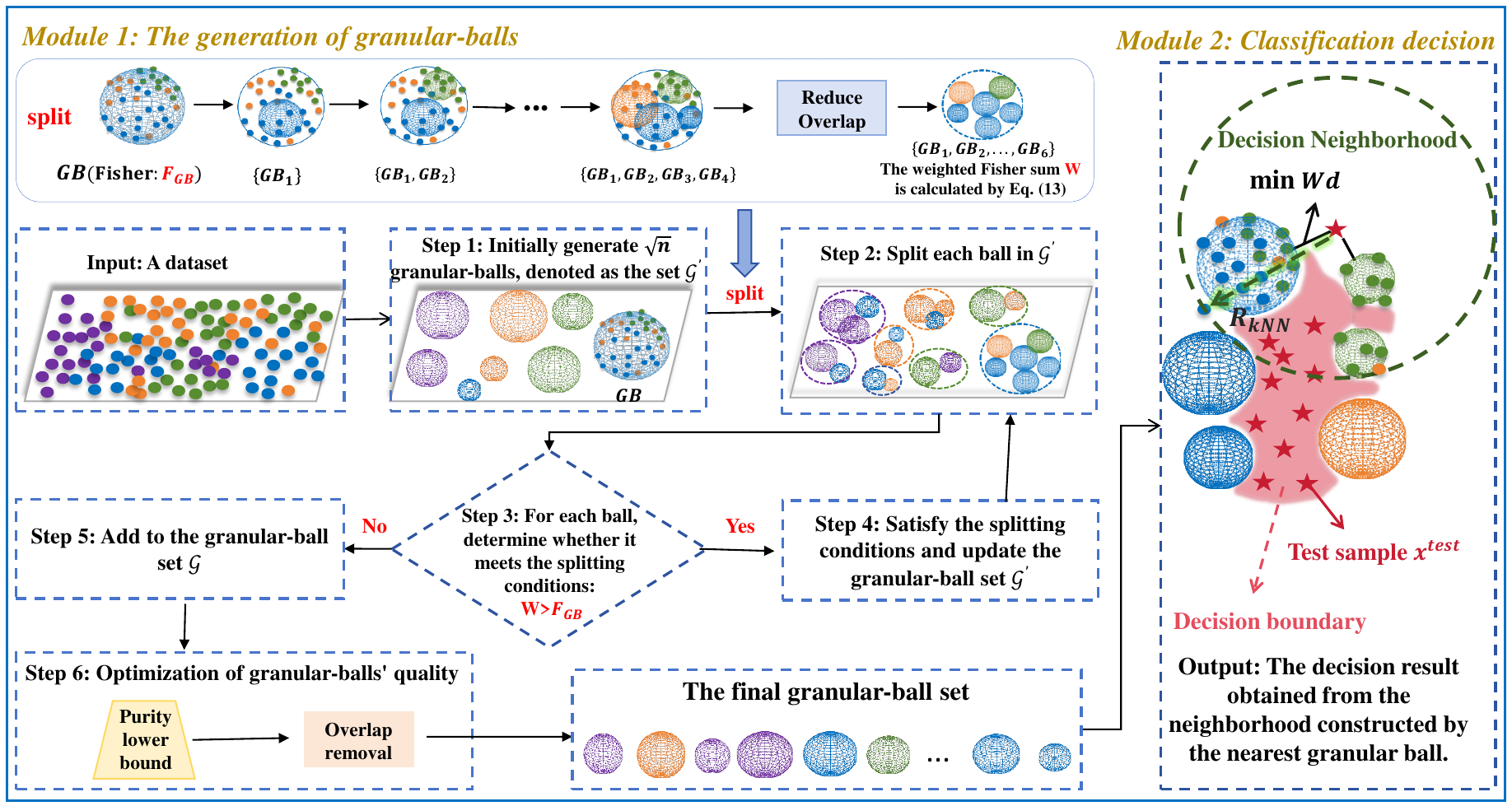}
	\caption{\textcolor{black}{The GBKNN, as illustrated in the overall flowchart, consists of two main modules: i) \textcolor{black}{granular ball} generation; ii) classification decision.}
    }
	\label{gbknn}
\end{figure*}

\section{Proposed method} \label{sec3}
\subsection{Motivation}


\textcolor{black}{Although existing granular ball-based KNN methods~\cite{xia2019granular} have clear advantages over traditional KNN in classification efficiency and robustness, they still have limitations in both granular ball generation and decision making. First, in supervised granular ball generation, existing methods usually rely on specified purity thresholds or purity-related functions as the partition criterion. As a result, they focus mainly on label information and cannot fully capture the geometric distribution of samples within each granular ball. Second, in the prediction stage, these methods typically classify directly based on the nearest ball, lacking a mechanism to construct a local neighborhood around the test sample, and thus failing to reflect the true distribution near the test sample fully. Although MGKNN~\cite{xie2024mgnr} enhances neighborhood modeling through granular ball clustering and enables adaptive adjustment of $k$, its granular ball generation still mainly relies on unsupervised mechanisms, making it difficult to fully exploit class supervision information and limiting its ability to represent true class boundaries. 
To address the above issues, this paper proposes a new supervised and fully adaptive granular ball generation method. The method first obtains an initial granular ball representation through $\sqrt{n}$-based coarse division and then uses the Fisher criterion, which captures both distributional representation and class discrimination, to control the granular ball generation process. The coarse-grained granular ball representation can effectively reduce the influence of noise points, thereby improving model robustness. In the decision stage, an adaptive local neighborhood is constructed for each test sample based on the nearest granular ball, and the value of $k$ is automatically determined by the number of samples in the neighborhood. This not only further reduces the influence of small noisy granular balls but also better matches the decision mechanism of KNN.}

The overall algorithm, as illustrated in Fig.~\ref{gbknn}, consists of two main modules: the \textcolor{black}{granular ball} generation and classification decision.
First, the entire dataset is globally split into~$\sqrt{n}$ initial \textcolor{black}{granular balls} to simplify data distributions within each ball (Fig.~\ref{gbknn}, Step 1). A coarse-to-fine splitting procedure is then performed, guided by \textcolor{black}{the Fisher criterion} that adaptively controls the splits without relying on preset thresholds. 
To further enhance decision boundary representation, a new purity lower bound based on label information is introduced, followed by a global overlap removal strategy to improve the structural clarity of \textcolor{black}{granular balls}.
In the classification decision stage (Fig.~\ref{gbknn}, Module 2), test samples are not granularized. 
\textcolor{black}{Instead, the nearest granular ball is located through a weighted distance mechanism, and the distance from the test sample to the farthest sample within that ball is used as the neighborhood radius $R_{kNN}$ to construct an adaptive neighborhood centered on the test sample, thereby enabling efficient dynamic selection of $k$.}

\subsection{Adaptive \textcolor{black}{Granular Ball} Generation}

\textbf{Initial Partitioning}. In the generation of \textcolor{black}{granular balls}, directly splitting the entire dataset may be affected by the complexity of data distribution, particularly in the presence of nonlinear structures such as crescents or rings. This is because GBG++ determines the center and radius of each \textcolor{black}{granular ball} solely based on majority-class samples. Such geometric assumptions fail to accommodate elongated or curved data regions, making it difficult to align \textcolor{black}{granular balls} with the true shape of the data. As a result, the generated \textcolor{black}{granular balls} often exhibit excessive overlaps or leave out relevant samples, as shown in Fig.~\ref{sqrt(n)}(a) and~\ref{sqrt(n)}(c), ultimately degrading the model’s ability to represent intrinsic structures accurately. 
To better handle complex data structures, the dataset $D = \{ x_i \mid i = 1, 2, \ldots, n \}$ is initially partitioned into $\sqrt{n}$ \textcolor{black}{granular balls} using the $k$-means algorithm.
Assume that the dataset contains $s$ classes, and $D^l$ denotes the subset of samples in the $l$-th class. The initial center $o^{(0)}$ is randomly selected according to the proportion of each class, i.e.,
\begin{equation}\label{sqrt_1}
o_{j}^{(0)} = { \rm Random} \left( \left\{ x_i \in D^l |l=1,2,..,s \right\} \right), j = 1,2,..., \sqrt{n}.
\end{equation}
Each data point $x_i \in D$ is assigned to the nearest centroid $o_j$ using $k$-means clustering:
\begin{equation}\label{sqrt_2}
{GB_{j^{*} }^{(t)} }  = \left\{x_i \in D |  {j^{*} } = arg \min_{j} \left\{ \Delta \left( x_i,o_{j}^{(t)}\right) \right\}\right\},
\end{equation}
where $t$ is the iteration index. The centroid is then updated as:
\begin{equation}\label{sqrt_3}
{o_{j}^{(t+1)} }  = \frac{1}{\left | GB_{j}^{(t)} \right | } \sum_{x_i \in GB_{j}^{t}}  x_i. 
\end{equation}
The alternating iteration follows Eq.~(\ref{sqrt_2}) and Eq.~(\ref{sqrt_3}) until the centroids converge, ultimately yielding $\sqrt{n}$ stable \textcolor{black}{granular balls} and completing the initial data split.

To further validate the effectiveness of first performing~$\sqrt{n}$ splitting, we conducted experiments on the crescent and ring-shaped datasets, and visualized the results using the GBG++ method, as shown in Fig.~\ref{sqrt(n)}(b) and~(d). Compared to directly applying the GBG++ method for splitting, the proposed strategy yields a more accurate approximation of the data distribution and better preserves the intrinsic geometric structure of the data. Specifically, on the crescent and ring-shaped datasets, the GBG++ method generated 109 and 85 \textcolor{black}{granular balls}, respectively, whereas the number was significantly reduced to 21 and 62 when preceded by $\sqrt{n}$-based splitting. This reduction is due to the initial $\sqrt{n}$ splitting, which effectively reduces the global mixing of data points, resulting in more stable subsequent \textcolor{black}{granular ball} partitioning and avoiding the computational overhead and overlap issues caused by excessive splitting. Therefore, this approach not only improves the data fitting capability but also optimizes the \textcolor{black}{granular ball} split efficiency, providing a better solution for applying the \textcolor{black}{granular ball} method to complex data structures.

\textbf{\textcolor{black}{Granular Ball} Splitting Method:} After generating $\sqrt{n}$ initial \textcolor{black}{granular balls}, we further refine the split. As reviewed in Section II-B, the GBG++ method employs an attention-based splitting strategy~\cite{xie2024gbg++}, which demonstrates superior performance in terms of computational efficiency and stability compared to traditional methods such as $k$-means and $k$-partitioning. Specifically, GBG++ effectively focuses on key data regions, enabling more accurate \textcolor{black}{granular ball} splitting while minimizing redundant computations and enhancing overall efficiency. Therefore, we adopt the GBG++ approach described in Eq.~(\ref{GBG++1}),~(\ref{GBG++2}),~(\ref{GBG++22}) and~(\ref{GBG++3}) to further split the initial \textcolor{black}{granular balls}, ensuring structural stability and optimizing computational costs.

In classification tasks, the overlap between heterogeneous \textcolor{black}{granular balls} can lead to misclassification of samples in the overlapping regions, thereby affecting classification accuracy. Traditional methods typically perform deduplication only after the \textcolor{black}{granular ball} generation is complete. In contrast, this paper innovatively integrates the deduplication strategy into the \textcolor{black}{granular ball} generation process by executing deduplication after each split~(as shown in Fig.~\ref{gbknn}). Further splitting is performed to reduce their size for overlapping heterogeneous \textcolor{black}{granular balls}, thereby optimizing subsequent \textcolor{black}{granular ball} splits. This strategy not only actively suppresses the accumulation of overlap, preventing excessive overlap caused by multiple rounds of splitting and improving the stability of \textcolor{black}{granular ball} splitting, but also gradually resolves the overlap issue during the splitting process, reducing the overall computational complexity of deduplication and improving computational efficiency.

\textbf{Stop Condition:} In the process of \textcolor{black}{granular ball} generation, the purity threshold is commonly used to guide the splitting of \textcolor{black}{granular balls}. However, due to the diverse characteristics of different datasets, the optimal value of the purity threshold often varies within a certain range, making it challenging to adaptively determine a fixed value in practice. 
\textcolor{black}{Existing supervised adaptive granular ball generation methods usually determine whether to split a ball according to whether the weighted sum of the purities of the resulting child balls increases~\cite{xia2022efficient}. However, purity only measures the proportion of the dominant label within a granular ball and cannot fully characterize the geometric distribution of samples inside it. To address this limitation, this paper introduces a new criterion to control the granular ball generation process, namely the Fisher criterion, which is defined as follows. Suppose a granular ball $GB$ contains samples from $L$ classes. Let $n_l$ denote the number of samples in the $l$-th class, $\mu_l$ the mean of the $l$-th class, and $\mu$ the overall mean of all samples in the granular ball. Then, the Fisher value of the granular ball is defined as:
	\begin{equation}
   F=\frac{\sum_{l=1}^{L}{n_l \left \| {\mu_l-\mu} \right \| }}{\sum_{l=1}^{L}{n_l S_l}},
	\end{equation}
where $S_l$ represents the intra-class divergence of class l, defined as:
\begin{equation}
S_l=\frac{1}{n_l}\sum_{x_i \in {{GB}_l}}{{{ \left \| {x_i-\mu_l}\right \| }^2}}.
\end{equation}
Here, $GB_l$ represents the set of samples of class $l$.}

\textcolor{black}{Unlike the stopping criterion based on the weighted sum of granular balls' purities, the Fisher criterion is not limited to measuring label consistency within a granular ball. Instead, it evaluates both inter-class separability and intra-class compactness from the perspective of feature distribution. The weighted purity sum is essentially based on label proportions. While it can reflect the degree of class mixing within a granular ball, it cannot adequately characterize the geometric structure of samples in the feature space. In contrast, by jointly modeling between-class scatter and within-class scatter, the Fisher criterion can more comprehensively characterize a granular ball, thereby providing a finer and more robust basis for ball splitting and stopping. A small Fisher value indicates insufficient class separation or excessive within-class dispersion, suggesting that the granular ball remains mixed and should be further split.}   

\textcolor{black}{Since a granular ball with purity equal to 1 is directly treated as a terminal ball, we further adopt a Fisher-based splitting criterion for non-pure granular balls. It should be noted that the Fisher criterion is not equivalent to purity, and therefore, we do not claim that it can theoretically guarantee a monotonic increase in purity. Instead,  it focuses on the change in local structure before and after splitting. Specifically, when the weighted Fisher value of the sub-ball set is higher than that of the parent ball, the split achieves a better balance between inter-class separability and intra-class compactness, thereby improving the representation of the granular ball, and the split is accepted; otherwise, it is rejected, and the parent ball is retained.}


\textbf{Optimization of \textcolor{black}{Granular Balls'} Quality:} \textcolor{black}{During granular ball partitioning, maintaining high purity within each granular ball is a key factor affecting classification performance. Existing splitting criteria may mistakenly accept some low-purity parent balls, which can negatively affect the final classification accuracy. For example, the initial \textcolor{black}{granular ball} has a purity of $P=55/100=0.55$. After splitting, two child balls are obtained with purities $P_1=28/50=0.56$ and $P_2=27/50=0.54$, respectively. In this case, although a split occurs, both child balls remain highly mixed, and neither inter-class separability nor intra-class compactness is substantially improved. As a result, the weighted Fisher value of the child balls may be close to, or even lower than, that of the parent ball, so the Fisher criterion would judge the split as ineffective and stop further splitting. However, because the purity is still low, such granular balls may reduce subsequent classification accuracy and weaken the overall quality of the granular ball representation.} To address the above issue, we propose a new category-adaptive purity lower bound $T_{L_j}$, which is defined as follows:
\begin{equation}\label{T_L}
	T_{L_j} = \frac{ {\textstyle \sum_{i=1}^{N}}\left | GB^*_{i}  \right |  }{\left | L_j \right | },
\end{equation}
where $L_j$ denotes the set of the $j$-th classes in the dataset $D$ and $N$ is the total number of \textcolor{black}{granular balls} set $\mathcal{G}$ generated after splitting.

This strategy allows for a more accurate assessment of the \textcolor{black}{granular ball} quality across different classes. It adaptively adjusts the purity for each class, ensuring that the parent \textcolor{black}{granular ball} stops splitting only after achieving sufficient quality. Additionally, it prevents low-quality \textcolor{black}{granular balls} from terminating prematurely, as shown in the diagram, thereby improving the overall \textcolor{black}{granular balls'} quality. In addition, since the de-overlapping operation is applied only locally within sub-balls during the process, residual overlaps between different classes may persist. Therefore, a final global de-overlapping step is required to eliminate such inter-class overlaps and ensure a clear decision boundary.


\subsection{Classification Decision}
\textcolor{black}{Existing granular ball-based KNN methods usually classify directly according to the distance between the test sample and the nearest boundary granular ball, which can improve classification efficiency and robustness to some extent. However, such methods essentially rely on only a single nearest granular ball for decision making and do not explicitly construct a local neighborhood around the test sample. To address this issue, this paper proposes a novel neighborhood decision-making mechanism based on granular balls. Specifically, it first selects the granular ball whose boundary is closest to the test sample, and then uses the distance from the test sample to the farthest sample within that ball as the neighborhood radius, thereby constructing an adaptive neighborhood centered on the test sample.}
To address this issue, this paper innovatively proposes a sample-number-based weighting method, which dynamically adjusts the confidence of \textcolor{black}{granular balls} to mitigate the impact of noisy \textcolor{black}{granular balls} on classification decisions, thereby enhancing classification robustness. Based on this, the weighted distance is defined as follows:
\begin{definition}
Let $\mathcal{G} = \{GB_i \mid i = 1, 2, ..., N\}$ be a \textcolor{black}{granular ball} set. The weighted distance between the test sample $x^{test}$ and the ball $GB_i \in \mathcal{G}$ is defined as:
\textcolor{black}{
\begin{equation}\label{W_d}
Wd_{i} \left( x^{test}, GB_i \right)= \left( 1-\frac{|GB_i|}{\sum_{j=1}^{N} |GB_j|} \right) \cdot \left( \Delta \left( x^{test},o_{i} \right)   - r_{i}\right),
\end{equation}}
where $|GB_i|$ represents the number of samples contained in the ball $GB_i$. 
\end{definition}
The weighted distance enhances the reliability of \textcolor{black}{granular ball} labels by adjusting the boundary distance based on the proportion of sample counts, prioritizing \textcolor{black}{granular balls} with larger sample sizes and reducing the influence of noise or small \textcolor{black}{granular balls} on decision-making.

\textcolor{black}{After obtaining the weighted nearest granular ball for the test sample, we further use the distance from the test sample to the farthest sample within that ball as the neighborhood radius, thereby constructing an adaptive neighborhood centered on the test sample. Since a granular ball consists of multiple similar samples, it can effectively reduce noise interference and provide more stable and reliable local group information than a single nearest sample point. The neighborhood induced by the nearest ball fully covers the samples within that ball while remaining compact. In other words, it preserves the reliable local information represented by the nearest granular ball without unnecessarily expanding the neighborhood.}

\textcolor{black}{Compared with existing KNN methods based on granular-ball computing, this approach explicitly constructs a strict neighborhood relation around the test sample, thus preserving the basic idea that labels are determined by neighborhoods. Compared with traditional KNN, the neighborhood here is first guided by the coarser granular ball, which gives the method advantages in both efficiency and robustness. Moreover, the actual number of samples contained in the neighborhood naturally determines the corresponding $k$ value, so the neighborhood size no longer depends on manual setting but is adaptively determined by the local data distribution around the test sample. Therefore, the proposed method not only retains the efficiency and robustness of multi-granularity granular ball representation, but also preserves the essential KNN characteristic of making decisions based on local neighborhoods.}

\subsection{Algorithm Procedure}

This paper proposes an improved \textcolor{black}{granular ball} KNN classification method, which consists of the generation of \textcolor{black}{granular balls} and classification decision~(as illustrated in Fig.~\ref{gbknn}). To optimize the initial \textcolor{black}{granular ball} partitioning and reduce the number of iterations, this paper directly splits the entire dataset into $\sqrt{n}$ \textcolor{black}{granular balls} and designs a new evaluation index to select the optimal initial \textcolor{black}{granular ball} division scheme, thereby simplifying the data distribution within each ball. Subsequently, the GBG++ method is employed for fast \textcolor{black}{granular ball} splitting, introducing weighted purity of sub-balls for dynamic control, achieving threshold-free and adaptive splitting. \textcolor{black}{Considering that low-quality granular balls may remain after splitting, this paper further introduces an optimization of the granular ball quality stage, which refines the granular ball set based on a new purity threshold to improve ball quality and provide more reliable granular ball representations for subsequent classification decisions. In the classification stage, the proposed method first identifies the nearest granular ball of the test sample using a weighted distance measure, then constructs an adaptive neighborhood centered on the test sample based on that ball, and finally determines the label by using the sample information within the neighborhood. The detailed procedure is given in Algorithm~\ref{alg:algorithm}.}



 \begin{algorithm}[t!]\footnotesize
	\caption{GBKNN algorithm}
	\label{alg:algorithm}
	\textbf{Input}: Noise training set $D_{noisy}$, test set $D_{test}$;\\    
	\textbf{Output}: The accuracy of GBKNN; 
	\begin{algorithmic}[1] 
		\STATE Select $\sqrt{n}$ initial points randomly according to the label ratio multiple times to split $\sqrt{n}$ balls using $k$-means, and evaluate according to Eq.~(\ref{Sum_T}) to select the optimal \textcolor{black}{granular ball} set $\mathcal{G}_{\sqrt{n}}$.
		\STATE Initial $W = 0$, $ \mathcal{G}_{ball} = \emptyset$, $ \mathcal{G}_{next} = \mathcal{G}_{\sqrt{n}}$;
        \WHILE{$\mathcal{G}_{next}\neq \emptyset$}
        \STATE Select one granular ball $GB_j$ from $\mathcal{G}_{next}$, calculate \textcolor{black}{the Fisher value $W$ of $GB_j$};
		\STATE Split the \textcolor{black}{granular ball} $GB_j$ using the GBG++ method;\\ 
		\textcolor{black}{Calculate the sum of all sub-balls' Fisher value $F_{GB_j}$;}  
		\IF {$ W < F_{GB_j}$}
		\STATE Delete $GB_j$ from $\mathcal{G}_{next}$, de-overlap heterogeneous balls in sub-balls and add the resulting balls to $\mathcal{G}_{next}$;
		\ELSE
		\STATE Add $GB_j$ to $\mathcal{G}_{ball}$ and remove it from $\mathcal{G}_{next}$;
		\ENDIF
		\STATE Return to step 4.
		\ENDWHILE      
		\STATE Get $T_{L_j}$ by the Eq.~(\ref{T_L}), and then use the GBG++ method to segment the balls with purity lower than $T_{L_j}$;
		\STATE De-overlap heterogeneous \textcolor{black}{granular balls};
		\STATE \textcolor{black}{According to Eq.~(\ref{W_d}), the weighted distance $W_{d}$ from each test sample to all granular balls is first computed to identify the nearest granular ball $GB^*$. 
        \STATE Then, the distance from the test sample to the farthest sample within that ball is taken as the radius $R_{kNN}(x)=\max_{x_i \in GB^*} \Delta \left( x,x_i \right)$. An adaptive neighborhood centered on the test sample can be constructed. 
        \STATE It can get the label of the test sample can be obtained based on the neighborhood, and the final accuracy can be obtained.}
	\end{algorithmic}
\end{algorithm}
\subsection{Theoretical Investigation}


To further validate the effectiveness of the initial splitting method, a theorem is presented that proves directly that splitting $\sqrt{n}$ \textcolor{black}{granular balls} results in balls with better density compared to the binary splitting method. Specifically, the direct splitting method better maintains the density balance of the clusters during the division, leading to theoretically superior partition results, with these \textcolor{black}{granular balls} exhibiting a clear advantage in terms of density distribution.
 
\begin{theorem}
Let the dataset be $D = \left\{x_1, x_2, \dots, x_n\right\} \subset \mathbb{R}^d$ and assume it is splitted into $\sqrt{n}$ \textcolor{black}{granular balls} $\{GB_i ,i= 1,2 ,...,\sqrt{n} \}$, where $r_i$ denotes the radius of the $i$-th ball. $\Omega _{GB_i}$ denotes the data space region that the \textcolor{black}{granular ball} $GB_i$ covers. The average density $\bar{\rho}$ is defined as:
\begin{equation}
     \bar{\rho} = \frac{1}{\sqrt{n}} \sum_{i=1}^{\sqrt{n}} \rho_i, \ \rho_i = \frac{1}{\sqrt{n}} \sum_{i=1}^{\sqrt{n}} \frac{n_i}{V_i}, 
\end{equation}
where $n_i =\left | GB_i \right |$ denotes the number of samples in $GB_i$, and $V_i$ represents the volume of $GB_i$. Consider the following two partitioning methods:

 \textbf{Method 1}~(Global Partition): The $\sqrt{n}$ \textcolor{black}{granular balls} are obtained by directly applying the k-means algorithm over the entire dataset;
 
  \textbf{Method 2}~(Recursive Partition): The $\sqrt{n}$ \textcolor{black}{granular balls} are generated via recursive binary splitting using 2-means.

  The following conclusions are satisfied:

 ~(i) If the data are uniformly distributed, then the average densities of samples in granular balls produced by both methods are approximately equal;

 ~(ii) Assuming that the data are drawn from a Lipschitz-continuous probability density function $f(x)$, the average density of sample points in the \textcolor{black}{granular balls} generated by \textbf{Method 1} is higher than that of \textbf{Method 2}, i.e., $\bar{\rho}^{(1)} > \bar{\rho}^{(2)}$.
\end{theorem}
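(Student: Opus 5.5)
The plan is to treat both partitions through their local-density approximation. Each ball's empirical density $n_i/V_i$ is replaced by the population density averaged over the region $\Omega_{GB_i}$ it covers. Then the two methods are compared through how well their cells adapt to the underlying density $f$. Concretely, I will write $n_i \approx n\int_{\Omega_{GB_i}} f(x)\,dx$ and $V_i = c_d r_i^d$, where $c_d$ is the unit-ball volume in $\mathbb{R}^d$. This gives $\rho_i \approx n\,\bar f_i\,|\Omega_{GB_i}|/V_i$, with $\bar f_i$ the mean of $f$ on $\Omega_{GB_i}$. The double-indexed definition of $\rho_i$ in the statement will be read as the per-ball quantity $n_i/V_i$, so that $\bar\rho=\frac{1}{\sqrt n}\sum_i n_i/V_i$.

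\textbf{Part (i).} If $f$ is constant on a region $\Omega$, then $n_i \approx n|\Omega_{GB_i}|/|\Omega|$. The key step is that, for both methods, the ratio $|\Omega_{GB_i}|/V_i$ is a cell-shape constant, bounded above and below. This holds because in both cases the cells are Voronoi-type regions of centroids, and $r_i$ is the mean distance to the centroid. Hence $\rho_i \approx \kappa\, n/|\Omega|$ for each ball under either method, and $\bar\rho^{(1)}\approx\bar\rho^{(2)}$. I will justify the shape constant by a standard result for $k$-means under uniform data (asymptotically equal-volume, roughly isotropic cells). For the recursive 2-means, I will note that each bisection of a uniform convex region yields halves of comparable shape, so only the constant $\kappa$ may differ slightly. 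This is why the statement says "approximately."

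\textbf{Part (ii).} I will use the Lipschitz condition $|f(x)-f(y)|\le L\|x-y\|$ in two ways. First, within a cell of radius $r_i$, $f$ deviates from $f(c_i)$ by at most $Lr_i$. Second, the global $k$-means optimum minimizes $\sum_i\sum_{x\in GB_i}\|x-c_i\|^2$ jointly over all $\sqrt n$ centers. By the classical quantization asymptotics (Zador/Gersho), its cell size therefore scales as $r(x)\propto f(x)^{-1/(d+2)}$: cells shrink in high-density regions. The recursive 2-means optimizes each split only locally, and its cell count per parent region is fixed at powers of $2$, independent of the mass distribution deeper in the tree. I then plan to show that Method 2's allocation of centers to regions is a feasible but suboptimal configuration for the global objective. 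Its distortion is therefore no smaller, which gives larger average radii in the dense regions. Summing $n_i/V_i$ will then produce $\bar\rho^{(1)}\ge\bar\rho^{(2)}$, with strict inequality whenever $f$ is non-constant, since the recursive tree cannot match the $f^{-1/(d+2)}$ profile.

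\textbf{Main obstacle.} The hardest step is converting "lower $k$-means distortion" into "higher average of $n_i/V_i$." Distortion is a sum of squared distances weighted by mass, whereas $\bar\rho$ is an unweighted average of $n_i/r_i^d$. My first attempt will use the Lipschitz bound to linearize $n_i \approx n f(c_i)c_d r_i^d$ up to $O(Lr_i)$ relative error. Substituting the optimal-quantizer radius profile then makes $\bar\rho$ a functional of the center density $\lambda$, and I will apply Jensen's or Hölder's inequality to show that this functional is maximized by the $k$-means point density $\lambda\propto f^{d/(d+2)}$. If that fails, I will fall back to a perturbative argument comparing a single recursive split against the global reassignment on a two-level density.
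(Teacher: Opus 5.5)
Your part (i) is essentially the paper's argument: under uniform data both methods produce cells of comparable volume and count, so $\bar\rho^{(1)}\approx\bar\rho^{(2)}$. Part (ii), however, has a genuine gap at exactly the step you flag, and the repair you propose cannot work.

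With your own linearization $n_i\approx n f(c_i)c_d r_i^d$, the radius cancels: $n_i/V_i\approx n f(c_i)$, up to a relative error $O(L r_i/f(c_i))$. Hence $\bar\rho\approx n\cdot\frac{1}{\sqrt{n}}\sum_i f(c_i)\approx n\int f\lambda\,dx$, where $\lambda$ is the normalized density of centers. Two things follow.

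\emph{First}, the step ``Method~2 has larger distortion, hence larger radii in dense regions'' has no bearing on $\bar\rho$ at leading order. Only where the centers sit matters.

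\emph{Second}, $\int f\lambda\,dx$ is linear in $\lambda$, and it grows as centers concentrate where $f$ is large. For $\lambda\propto f^{\alpha}$ it equals $\int f^{1+\alpha}/\int f^{\alpha}$. Its derivative in $\alpha$ is the covariance of $f$ and $\log f$ under the law $\propto f^{\alpha}$, so it is nonnegative, and e.g.\ $\alpha=1$ beats the Zador exponent $d/(d+2)$. No Jensen or H\"older argument can therefore show that the $k$-means point density maximizes this functional. Optimality for the quantization objective says nothing about the sign of $\bar\rho^{(1)}-\bar\rho^{(2)}$. You would instead need the center density $\lambda_2$ actually produced by the recursive scheme, and then a direct comparison of $\int f\lambda_1$ with $\int f\lambda_2$.

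That comparison does not go your way in general. Take $d=1$ and $f(x)=2x$ on $[0,1]$. This $f$ is Lipschitz and log-concave, so the $k$-means optimum is unique and has Panter--Dite density $\lambda_1\propto x^{1/3}$. That gives $\int f\lambda_1=8/7$. All cells are intervals, so the shape factor is the same for both methods. The first 2-means cut solves $t^2+t-1=0$, i.e.\ $t=(\sqrt{5}-1)/2\approx 0.618$.
\begin{itemize}
\item Your balanced tree sends half of the leaves to $[t,1]$, whereas Method~1 puts only $1-t^{4/3}\approx 0.474$ of its centers there.
\item Even if each half were then filled according to $f^{1/3}$, $\int f\lambda_2\approx \frac{1}{2}\cdot\frac{8}{7}t+\frac{1}{2}\cdot\frac{8}{7}\cdot\frac{1-t^{7/3}}{1-t^{4/3}}\approx 1.167>8/7$.
\item The cuts inside $[0,t]$ are a rescaled copy of the first one and push still more leaves toward the dense side.
\end{itemize}
So here $\bar\rho^{(2)}>\bar\rho^{(1)}$ to leading order.

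Your claim that the tree cannot match the $f^{-1/(d+2)}$ profile also fails locally. A 2-means bisection of a short cell, across which $f$ has relative slope $\varepsilon$, cuts at $1/2+\varepsilon/24$ of the cell (measured from its sparser end). To first order in $\varepsilon$, that is exactly the median of $f^{1/3}$ on the cell. So at fine scales the tree reproduces $\lambda_1$, and the only mismatch comes from the coarse cuts, which favor Method~2.

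The paper does not supply this bridge either. Its proof uses the same Lipschitz decomposition ($\rho_i\approx nf(c_i)$ plus a term bounded by $nL_ir_i$). It then simply asserts that globally placed centers have larger $f(c_i)$ and smaller $r_i$, while recursively placed ones drift to lower-density regions. So (ii) is heuristic. A rigorous version would have to fix the rule deciding which ball is split next, derive $\lambda_2$, and compare $\int f\lambda$ directly. For the balanced-tree rule you adopted, the inequality can point the other way.
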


\begin{proof}
   ~(i) Uniform Distribution.

    Assume that the dataset is normalized to lie within the unit hypercube, so that all data points $x_i \sim U([0,1]^d)$ are drawn from a uniform distribution over the unit hypercube. For Method 1, the $k$-means algorithm is directly applied to split the data into $\sqrt{n}$ \textcolor{black}{granular balls}. Since the data are uniformly distributed, each \textcolor{black}{granular ball} covers approximately the same region, with volume denoted by $V$, and the number of samples contained in each ball, $|GB_i| \approx \sqrt{n}$, is roughly equal. In $\mathbb{R}^d$, the volume of a ball~\cite{le2017circumspheres} with radius $r$ is given by 
    \begin{equation}
        V_d(r) = \frac{\pi^{d/2}}{\Gamma(1 + d/2)} {r^d} = \xi_d r^d,
    \end{equation}
    where $\Gamma$ denotes the gamma function and $\xi_d$ is a constant. When $d=1$, thus $\xi_1=2$; hence, $V(r)=2r$, which represents the length of a line segment. When $d=2$, we have $\xi_2=\pi$; therefore, $V(r)=\pi r^2$, which represents the area of a circle.  

    
 The average density of sample points in a \textcolor{black}{granular ball} is 
  \begin{equation}
  \bar{\rho}^{(1)} \approx  \frac{\sqrt{n}}{V_d}.
     \end{equation}

For Method 2, under the assumption of uniform data distribution, the recursive splitting process is performed in a balanced manner, resulting in \textcolor{black}{granular balls} with equal volumes; similarly, we have 
  \begin{equation}
 \bar{\rho}^{(2)} \approx  \frac{\sqrt{n}}{V_d}  .
     \end{equation}
     
In summary, when the samples $x_i \sim U([0,1]^d)$ are drawn from a uniform distribution over the unit hypercube, the \textcolor{black}{granular balls} generated by both methods have similar volumes and sample counts, and thus their densities tend to be approximately equal, i.e.,
  \begin{equation}
  \bar{\rho}^{(1)} \approx \bar{\rho}^{(2)} . 
     \end{equation}
     
(ii) Non-uniform Distributions.

Assume that the data samples are drawn from a probability density function $f(x)$. On each \textcolor{black}{granular ball} $GB_i$, the function $f(x)$ has a bounded gradient and satisfies the Lipschitz condition; that is, there exists a constant $L_i >0$ such that~\cite{sriperumbudur2010hilbert}:
  \begin{equation}
  \forall x,y \in GB_i, \left | f(x) -f(y)\right | \le L_i \left \| x-y \right \|.
     \end{equation}
Since the samples follow the distribution $f(x)$, the expected number of samples in $GB_i$ is approximately 
\begin{equation}
  E\left [ \left | GB_i \right |  \right ] \approx n \int_{\Omega _{GB_i}} f\left( x \right ) dx.
\end{equation}
We rewrite the function $f(x)$ on $GB_i$ as:
     \begin{equation}\label{f}
     f\left( x \right ) = f\left( c_i \right ) + \left ( f\left( x \right ) - f\left ( c_i \right ) \right ).
        \end{equation}
Substitute this approximation Eq.~(\ref{f}) into the integral:
    \begin{equation}\label{f2}
    \begin{split}
    \int_{\Omega _{GB_i}} f\left ( x \right ) dx &= \int_{\Omega _{GB_i}}  \left ( f\left( c_i \right ) + \left( f\left( x \right ) - f\left ( c_i \right ) \right ) \right ) dx \\ &=  f\left( c_i \right )V_i + \int_{\Omega _{GB_i}} \left( f\left( x \right ) - f\left( c_i \right ) \right )dx.
    \end{split}
     \end{equation}
By multiplying by the total number of samples $n$, the expected number of samples can be estimated as:
    \begin{equation}\label{f3}
    \begin{split}
  E\left [ \left | GB_i \right |  \right ] & \approx n \int_{\Omega _{GB_i}} f\left ( x \right ) dx \\  &=
  n f\left ( c_i \right )V_i + n \int_{\Omega _{GB_i}} \left ( f\left ( x \right ) - f\left ( c_i \right ) \right )dx \\  &= n f\left ( c_i \right )V_i  +  \epsilon_i.
    \end{split}
    \end{equation}
Therefore, $\epsilon_i$ represents the error in estimating $E\left [ \left | GB_i \right |  \right ]$—that is, the error caused by using $f\left ( c_i \right )V_i$ instead of the integral $\int_{\Omega _{GB_i}} \left ( f \left ( x \right )  \right )dx$. Since $f(x)$ is Lipschitz continuous on $GB_i$, that is,
\begin{equation}
\left | f\left ( x \right ) - f\left ( c_i \right )  \right | \le L_i  \left \| x-c_i \right \| \le L_i r_i . 
\end{equation}

Then, the corresponding error term can be bounded as follows:
\begin{equation}
 \epsilon_i   \le  n \int_{\Omega _{GB_i}} \left ( L_i r_i \right )dx \le n  L_i r_i V_i. 
\end{equation}
So the density $\rho_{i}$ of samples in \textcolor{black}{granular ball} $GB_i$ can be estimated as follows:
\begin{equation}\label{ro}
\rho_{i} = \frac{ n_i} {V_i} = \frac{E\left [ \left | GB_i \right |  \right ]}{V_i} \approx  n f\left( c_i \right ) + n L_i r_i ,
\end{equation}
where $f(c_i)$ denotes the value of the density function at the center $c_i$ of the ball. \textcolor{black}{For each \textcolor{black}{granular ball}, a larger $f\left( c_i \right )$ and a smaller error term $n L_i r_i$ lead to a higher estimated density.}

Method~1 generates $\sqrt{n}$ \textcolor{black}{granular balls} across the entire data space. The samples within each ball are as concentrated as possible, and $r_i$ is small, which increases the concentration of samples within the clusters, i.e., larger $f(c_i)$. Since k-means tends to generate balls with balanced sizes and small $r_i$, the sample concentration within each ball is higher, leading to smaller error. As a result, according to Eq. (\ref{ro}), the estimated density $\rho_i$ for each \textcolor{black}{granular ball} is larger. Therefore, Method~1 yields \textcolor{black}{granular balls} with higher individual densities $\rho_i$, and consequently achieves a larger average density:
\begin{equation}
\bar{\rho}^{(1)} = \frac{1}{\sqrt{n}} \sum_{i=1}^{\sqrt{n}} \rho_i.
\end{equation}

Method 2 uses recursive 2-means splitting. Since each round of division only focuses on local information, the final centers $c_i$ are no longer globally optimal and may fall into regions with lower density $f(x)$. This can result in \textcolor{black}{granular balls} with larger radii or uneven sample distributions, where $f\left( c_i \right )$ is smaller, $r_i$ is larger, and $L_i$ is unstable, leading to larger error terms. Consequently, the average density $\bar{\rho}^{(2)}$ tends to be lower.
Therefore,
\begin{equation}
\bar{\rho}^{(1)} > \bar{\rho}^{(2)}.
\end{equation}
\end{proof}

\textbf{Remark}: In density estimation, the $nL_i r_i$ term represents an upper bound for the error, not the actual trend of density increasing with radius $ r_i$. This is a consequence of using the Lipschitz continuity to estimate the bias introduced in the integral approximation. As a result, the density approximation takes a form related to this argument. In fact, the density $\rho_i $ does not necessarily increase with the radius $r_i$, especially in non-uniform distributions, where the sample density in regions farther from the center of the ball typically decreases. Therefore, the main term $nf(c_i)$ should be considered the core estimate of the density, while the error term $n L_i r_i$ is only used to quantify the bias and does not reflect the actual trend of density change.

In the above method, the selection of initial points is crucial. Initially, we randomly selected initial points based on the proportion of samples with different labels. However, this random strategy does not guarantee the optimality of \textcolor{black}{granular ball} splitting, often requiring multiple computations and validation to determine the best partitioning, leading to high computational costs. Moreover, random initialization may result in an imbalanced distribution of initial \textcolor{black}{granular balls}, which can affect the stability of subsequent splits and the overall quality of the final \textcolor{black}{granular balls}.

To address the above issue, a heuristic criterion based on density evaluation is proposed to optimize the initial split of $\sqrt{n}$ \textcolor{black}{granular balls}. Specifically, we first generate $\sqrt{n}$ initial \textcolor{black}{granular balls}, denoted as $\mathcal{G}_{\sqrt{n}}$, and introduce the sum of \textcolor{black}{granular balls'} densities as an evaluation metric to assess the quality of the partitioning. This criterion enhances the separability of the initial \textcolor{black}{granular balls} and provides a more stable and effective foundation for subsequent refinement. The mathematical definition is as follows:
\begin{equation}\label{Sum_T}
	Sum_T = \sum_{j=1}^{\sqrt{n}} {\rho}_{j},
\end{equation}
where ${\rho}_{j}$ denotes the density estimate of the \textcolor{black}{granular ball} $GB_j \in \mathcal{G}_{\sqrt{n}}$, reflecting the compactness of the sample distribution within the ball. A higher density indicates that the samples are more tightly clustered, suggesting stronger representational and discriminative capabilities of the \textcolor{black}{granular ball}.
By introducing a density-based evaluation criterion, a relatively optimal initial \textcolor{black}{granular ball} split can be achieved with fewer random selections. This approach effectively reduces uneven distributions caused by randomness, enhancing the stability and reproducibility of the splitting process. Moreover, it promotes a more compact and balanced initial structure, providing a more robust foundation for subsequent \textcolor{black}{granular ball} refinement. As a result, the overall computational cost is reduced, and generation efficiency is significantly improved.

\subsection{Time Complexity}

The overall time complexity of the proposed algorithm can be \textcolor{black}{considered} from two stages: the \textcolor{black}{granular ball} generation stage and the classification decision stage. The \textcolor{black}{granular ball} generation stage primarily consists of the initial partitioning of \textcolor{black}{granular balls} and the dynamic splitting process based on the GBG++ method. \textcolor{black}{In contrast, the classification decision stage uses a weighted nearest ball to construct a neighborhood centered on the predicted point and completes the classification.} Next, we analyze the computational complexity of each stage separately.

\subsubsection{Granular Ball Generation Stage} 

\textcolor{black}{First, we apply $k$-means to obtain an initial division of the dataset into $\sqrt{n}$ granular balls, which simplifies subsequent computations. The time complexity of $k$-means is typically $O(nkT_k)$, where $T_k$ denotes the number of $k$-means iterations. By setting $k=\sqrt{n}$, the time complexity of this initial division stage becomes $O(n\sqrt{n}T_k)$.} Subsequently, during the GBG++ iterative splitting stage, \textcolor{black}{each ball is controlled based on the weighted Fisher sum of its split sub-balls}, ultimately forming $G$ \textcolor{black}{granular balls}
Since the time complexity of GBG++ is close to linear, i.e., $O(n)$, and \textcolor{black}{granular balls} tend to stabilize during the intermediate stages, the overall complexity for \textcolor{black}{granular ball} splitting can be approximated as $O(n)$~\cite{xie2024gbg++}. Furthermore, the de-overlapping operation is performed after each split, but it only affects a small number of adjacent \textcolor{black}{granular balls}, thus the computational load is relatively low and can be neglected. Therefore, the total time complexity for the entire \textcolor{black}{granular ball} generation stage can be expressed as $O(n\sqrt{n}T_k) + O(n) = O(n\sqrt{n})$.

\subsubsection{Classification Decision Stage} 

\textcolor{black}{In the classification stage, for each test sample, the weighted distances to all granular balls are first computed to identify the weighted nearest ball and initialize the neighborhood radius. Let the number of test samples be $M$, and the final number of \textcolor{black}{granular balls} be $N$. The time complexity of this step is $O(MN)$.
To avoid exhaustive search over all training samples, the proposed method further uses the geometric boundaries of granular balls to filter candidate neighborhoods. Specifically, only those granular balls whose boundary distance to the test sample is no greater than the neighborhood radius are considered to potentially intersect the neighborhood. Therefore, sample-level distance computation is performed only within these candidate balls. In this way, neighborhood construction consists of two levels: a coarse filtering at the granular ball level, followed by a fine filtering at the sample level within the candidate balls. Let $s_i$ denote the number of samples in the candidate granular balls of the $i$-th test sample that require further distance computation. Then, the total complexity of the sample-level fine filtering over the whole test set is $O(\sum_{i=1}^M s_i)$. Accordingly, the total time complexity of the classification stage can be written as $O\left ({MN+\sum_{i=1}^{M}{s_i}}\right )$. If $\bar{s}$ denotes the average number of samples that need to be further checked within the candidate granular balls for each test sample, the above complexity can be further simplified as $O\left ({M\left (N+\bar{s}\right )}\right )$.}

\subsubsection{Overall Time Complexity Analysis}

\textcolor{black}{Traditional KNN is a lazy learning method, whose main computational cost is concentrated in the testing stage, with complexity $O(Mn)$. In contrast, the proposed method consists of two stages: a granular ball generation stage and a classification stage. The complexity of the granular ball generation stage is $O(n\sqrt{n})$, while the complexity of the classification stage is $O(M(N+\bar{s}))$. Therefore, the total complexity of the proposed method is $O\!\left(n\sqrt{n} + M(N+\bar{s})\right).$
When the number of test samples is large and $N+\bar{s} \ll n$, the additional cost introduced by preprocessing can be compensated by the efficiency gain in the subsequent classification stage, so that the proposed method achieves an overall computational advantage.}

\section{Experiment details}\label{sec:experiment}
This section conducts an experiment analysis from four main aspects: i) comparing the classification accuracy of different methods across various datasets; ii) analyzing the accuracy variation trends of each method under different noise conditions; iii) evaluating the runtime performance of different methods; \textcolor{black}{iv) evaluating the classification accuracy and running time of GBKNN on large-scale datasets; v) verifying, through ablation studies, the effectiveness of the $\sqrt{n}$-based coarse partition, neighborhood construction, and purity lower-bound mechanism in improving model performance.}


\subsection{Setup}

\textbf{Datasets:} 
\textcolor{black}{To verify the feasibility and effectiveness of the proposed method, experiments were conducted on 17 real-world datasets, including 12 commonly used datasets from the UCI Machine Learning Repository\footnote[1]{\url{http://archive.ics.uci.edu/ml.}} and 5 newly added large-scale datasets (Santander\footnote[2]{\url{https://www.kaggle.com/c/santander-customer-transaction-prediction/overview}}, creditcard\footnote[3]{\url{https://www.kaggle.com/datasets/mlg-ulb/creditcardfraud?utm_source=chatgpt.com}}, covertype\footnote[4]{\url{https://huggingface.co/datasets/inria-soda/tabular-benchmark}}, Higgs\footnotemark[4], and supersymmetry\footnotemark[1]).} The dataset ranges from 169 to \textcolor{black}{1,048,575} samples, with feature dimensions varying between 2 and \textcolor{black}{200} and the number of classes ranging from 2 to 9. Further details are presented in Table~\ref{tab:table 1}.

\begin{table}[!ht]
	\centering
    	\caption{Data information}
	\begin{tabular}{ccccc}
		\midrule[1pt]
		NO. & Datasets           & Sam & Dim & Classes \\ \midrule
		1             & monks-2             & 169          & 7          & 2       \\
		2             & heart1              & 294          & 13         & 2       \\
		3            & cleveland           & 297          & 13         & 5       \\
		4             & haberman            & 306          & 3          & 2       \\
		5             & balance-scale       & 625          & 4          & 2       \\
		6             & fourclass           & 862          & 2          & 2       \\
		7             & phoneme             & 5,404         & 5          & 2       \\
		8            & Anuran              & 7,195         & 22         & 7       \\
		9             & mushroom            & 8,124         & 22         & 2       \\
		10            & anneal              & 20,867        & 10         & 9       \\
		11            & rssi                & 23,500        & 3          & 3       \\
		12            & Skin NonSkin       & 35,961        & 3          & 2       \\ 
        \textcolor{black}{13}            & \textcolor{black}{Santander}    & \textcolor{black}{199999}       & \textcolor{black}{200}        & \textcolor{black}{2}       \\
        \textcolor{black}{14}            & \textcolor{black}{creditcard}    & \textcolor{black}{284806}       & \textcolor{black}{30}        & \textcolor{black}{2}        \\
		\textcolor{black}{15}            & \textcolor{black}{covertype}    & \textcolor{black}{566601}   &  \textcolor{black}{10}    &  \textcolor{black}{2}       \\
        \textcolor{black}{16}            & \textcolor{black}{Higgs}       & \textcolor{black}{937882}      & \textcolor{black}{24}         & \textcolor{black}{2}       \\ 
		\textcolor{black}{17}            & \textcolor{black}{supersymmetry}       & \textcolor{black}{1048575}      & \textcolor{black}{18}         & \textcolor{black}{2}       \\ 
		\midrule
	\end{tabular}
	\label{tab:table 1}
\end{table}

\textbf{Baselines:} The proposed method is compared with several baseline algorithms, including KNN~\cite{soucy2001simple}, NaNEKNN~\cite{zhu2016natural}, adaKNN variants~\cite{mullick2018adaptive}, SMKNN~\cite{ayyad2019gene}, LMKNN~\cite{ayyad2019gene}, OneStepKNN~\cite{zhang2021KNN}, PLKNN~\cite{jodas2022pl}, MGNR~\cite{xie2024mgnr}, \textcolor{black}{GBKNN\_2019~\cite{xia2019granular} and the method based on sub-balls weighted purity and control ball division, and using weighted nearest granular ball decision-making, is denoted as GBKNN\_p.} 

\begin{table*}[!h]
    \centering
    \centering
	\scriptsize
	\setlength
	\tabcolsep{3.5pt} 
    \caption{Average classification accuracy of different methods under various noise conditions}
        \begin{tabular}{lcccccccccccccc}
            \midrule[1pt]

            Data & monks-2 & heart1 & cleveland & haberman & balance-scale & fourclass & phonme & Anuran & mushroom & anneal & rssi & Skin Nonskin & Avg. \\
            \midrule
            KNN  & 0.5798 & 0.6281 & 0.4680 & 0.6684 & 0.8949 & 0.9884 & \textbf{0.8437} & 0.9830 & 0.9876 &\ul { 0.9231} & 0.8585 & 0.9844 & 0.8173 \\ 
            NaNEKNN & \textbf{0.7395} & 0.7044 & 0.4483 & 0.7347 & 0.8903 & 0.9527 & 0.8332 & 0.9526 & 0.9675 & 0.8689 & 0.8477 & \ul {0.9883} & 0.8273 \\ 
            adaKNNGIHS & 0.6807 & 0.6305 & 0.2266 & 0.5408 & 0.8880 & 0.9826 & 0.7846 & 0.8973 & \textbf{0.9935} & 0.7369 & 0.6919 & 0.9848 & 0.7532 \\ 
            adaKNN & 0.6471 & 0.6404 & 0.4483 & 0.5714 & 0.8480 & 0.9701 & 0.7836 & 0.9626 & 0.9900 & 0.8852 & 0.7820 & 0.9874 & 0.7930 \\ 
            adaKNN2 & 0.6555 & 0.6232 & 0.4384 & 0.5893 & 0.8366 & 0.9709 & 0.7995 & 0.9520 & 0.9318 & 0.8834 & 0.8165 & 0.9819 & 0.7899 \\ 
            adaKNN2GIHS & 0.5798 & 0.5788 & 0.2020 & 0.4923 & 0.8274 & 0.9635 & 0.7786 & 0.9042 & 0.9305 & 0.8174 & 0.7515 & 0.9634 & 0.7325 \\ 
            SMKNN & 0.6218 & 0.7586 & 0.5320 & \ul{0.7985} & 0.9211 & 0.8646 & 0.7955 & 0.9435 & 0.9537 & 0.9042 & 0.8280 & 0.8156 & 0.8114 \\ 
            LMKNN & 0.6050 & \ul {0.7759} & 0.4729 & 0.7806 & 0.9223 & 0.8796 & 0.7939 & 0.6941 & 0.9256 & 0.7315 & 0.8524 & 0.7200 & 0.7628 \\ 
            OneStepKNN & 0.4580 & 0.6576 & 0.4483 & 0.7168 & 0.4400 & 0.5905 & 0.4262 & 0.5728 & 0.5029 & 0.7278 & 0.5240 & 0.6802 & 0.5621 \\
            PLKNN & 0.6008 & 0.7389 & 0.4138 & 0.7500 & 0.8754 & 0.7965 & 0.7818 & 0.9237 & 0.9210 & 0.8906 & 0.7387 & 0.6988 & 0.7608 \\ 
            MGNR & 0.7017 & 0.6379 & 0.4483 & 0.6199 & 0.8686 & 0.9718 & 0.8334 & 0.9684 & 0.9382 & 0.9177 & 0.8280 & 0.9714 & 0.8088 \\ 
            \textcolor{black}{GBKNN\_2019}   & \textcolor{black}{0.7227}  & \textcolor{black}{0.7118}  & \textcolor{black}{0.5222}  & \textcolor{black}{0.7168}  & \textcolor{black}{0.8994}  & \textcolor{black}{0.8937}  & \textcolor{black}{0.8115}  & \textcolor{black}{0.9831}  & \textcolor{black}{0.9341}  & \textcolor{black}{0.9168} & \textcolor{black}{0.8043}  & \textcolor{black}{0.9175}  & \textcolor{black}{0.8195}  \\ 
            \textcolor{black}{GBKNN\_p} & 0.7143 & \ul{0.7783} & \ul{0.5369} & 0.7628 & \ul{0.9637} & \ul{0.9950} & 0.8401 & \textbf{0.9869} & \ul{0.9924} & 0.9204 & \ul{0.8884} & \textbf{0.9922} & \ul{0.8622} \\ 
            \textcolor{black}{GBKNN} & \textcolor{black}{\ul {0.7311}} & \textcolor{black}{\textbf{0.8030}} & \textcolor{black}{\textbf{0.5567}}& \textcolor{black}{\textbf{0.8291}} & \textcolor{black}{\textbf{0.9669}} & \textcolor{black}{\textbf{0.9967}} & \textcolor{black}{\ul{0.8403}} & \textcolor{black}{\ul{0.9858}} & \textcolor{black}{0.9896} & \textcolor{black}{\textbf{0.9286}} & \textcolor{black}{\textbf{0.8991}} & \textcolor{black}{0.9766} & \textcolor{black}{\textbf{0.8736}} \\
            \midrule
        \end{tabular}
        \label{acc}
\end{table*}

\textbf{Implementation details:} For the traditional KNN algorithm, the value of $k$ is set within the range $[1, 15]$ with a step size of 2, and the optimal $k$ is determined through hyperparameter tuning using the training set. In the generation of \textcolor{black}{granular balls}, our method first randomly selects the initial point 10 times to split $\sqrt{n}$ balls and then selects the optimal splitting result for subsequent calculations based on the evaluation index. To ensure the comparability of performance evaluation across different algorithms on the same dataset, classification accuracy~(ACC) is adopted as the evaluation metric, which quantifies the consistency between classification results and ground truth labels. A higher ACC value indicates better classification performance. 
In the experiments, all datasets are randomly split into train and test sets in a $8:2$ ratio to ensure fairness and robustness.
Each dataset is augmented by introducing noise at different proportions: 0\%, 5\%, 10\%, 15\%, 20\%, 25\%, and 30\%.
It is important to note that noise is introduced only to the training set, while the validation and test sets remain free of noise, ensuring the reliability and validity of the results. For all experiments, our hardware experimental environment is an AMD Ryzen Threadripper PRO 5975WX 32-Cores 3.60 GHz, and the software experiment environment is Python 3.9.

\subsection{Main Results}
In Table~\ref{acc}, the average accuracy results of different methods under 7 noise levels are presented, with the optimal results highlighted in \textbf{bold} and the second-best results marked with \textbf{\ul{underline}}. As shown in Table~\ref{acc}, the GBKNN algorithm achieves the best or second-best accuracy on most datasets, significantly outperforming the other \textcolor{black}{13} algorithms. Particularly under various noise conditions, \textcolor{black}{GBKNN and GBKNN\_p show notable improvements} in average accuracy compared to the latest multi-granularity-based MGNR method. \textcolor{black}{GBKNN further achieves better performance than GBKNN\_p}. For example, on the `heart1' dataset, the accuracy of the MGNR method is 0.6379, while the proposed GBKNN algorithm achieves an accuracy of \textcolor{black}{0.8030}, surpassing MGNR by approximately \textcolor{black}{25.88\%}. On all datasets, the GBKNN method reaches an average accuracy of \textcolor{black}{0.8736}, outperforming other algorithms. \textcolor{black}{In comparison, the average accuracies of GBKNN\_2019 and GBKNN\_p are 0.8195 and 0.8622, respectively. Therefore, GBKNN improves the average accuracy by approximately 6.60\% and 1.32\% compared with GBKNN\_2019 and GBKNN\_p, respectively.}

\textcolor{black}{This result fully demonstrates the outstanding performance of the GBKNN algorithm in classification tasks, clearly surpassing existing methods. The superior performance of GBKNN mainly comes from its adaptive granular ball representation and neighborhood construction mechanism. Specifically, the initial partition into $\sqrt{n}$ coarse granular balls preserves the overall distribution while simplifying local data structures, thereby reducing the effect of complex nonlinear distributions. Compared with GBKNN\_p, GBKNN uses the Fisher criterion for granular ball partitioning. Unlike purity, the Fisher criterion simultaneously captures inter-class separability and intra-class compactness, thus providing more reasonable guidance for granular ball generation. Classifying based on coarse-grained granular balls helps reduce the impact of noise points and enhances robustness. In the decision stage, GBKNN first searches for the nearest granular ball by a weighted distance, and then uses the distance from the test sample to the farthest sample within that ball as the neighborhood radius to construct an adaptive neighborhood around the test sample. Since a granular ball consists of multiple similar samples, it provides more reliable local group information than a single nearest neighbor. Meanwhile, the neighborhood induced by the nearest granular ball can fully cover the reliable local sample group while remaining compact, which helps to enhance robustness and improve the overall classification accuracy.}

\subsection{Effect of varying noise levels.}
To evaluate the impact of different noise levels, multiple noise intensities were considered, with the results presented in Fig.~\ref{acc_noisy}. Overall, as the noise level increases, most algorithms exhibit varying degrees of performance degradation, highlighting the sensitivity of KNN-based methods to noise interference. Notably, OneStepKNN, adaKNNGHIS, and adaKNN2GHIS show the most significant fluctuations, particularly on datasets such as balance-scale and phoneme, where their accuracy drops substantially, limited in robustness to noisy conditions.


In contrast, GBKNN demonstrates notable stability and robustness. Across all 12 datasets, GBKNN consistently maintains high accuracy under high-noise conditions, with minimal performance decline. For example, on the `haberman', and `rssi' datasets, GBKNN maintains an accuracy above 0.75 even when noise reaches 30\%. \textcolor{black}{On the `heart1' dataset, GBKNN consistently outperforms all other compared methods across all noise levels. 
This notable advantage is mainly attributed to the group-based decision mechanism of GBKNN. Unlike methods that rely on individual sample points, coarse-grained granular balls reduce the impact of noise, thereby providing more reliable and robust local information. Furthermore, the neighborhood radius is defined as the distance between the test sample and the farthest sample within the nearest granular ball, thus constructing an adaptive neighborhood centered at the test sample. Such a neighborhood can fully cover the relevant samples within the nearest granular ball while maintaining good local compactness, thereby effectively enhancing the discriminative stability of the model in noisy environments. This is also a key reason why GBKNN can maintain high accuracy and exhibit strong noise resistance even under high noise levels.}


In comparison, traditional methods such as KNN, LMKNN, and PLKNN perform reasonably well in low-noise scenarios but degrade rapidly as noise increases, reflecting their lack of mechanisms to handle data perturbations. NaNEKNN and SMKNN exhibit performance close to GBKNN on specific datasets such as `mushroom' and `fourclass', but are generally less stable. Additionally, MGNR performs well on datasets like `heart1' and `anneal', suggesting that its natural neighbor strategy is effective in some tasks. However, its performance still suffers under high noise due to susceptibility to false neighbors. In summary, GBKNN exhibits strong adaptability and noise resilience across diverse data distributions and noise levels. It stands out as a leading approach among existing KNN variants in terms of both stability and accuracy, making it particularly suitable for noise-sensitive or structurally complex tasks.

\begin{figure*}[!ht]  
    \centering  
    \begin{subfigure}[b]{0.8\textwidth}
        \centering
        \includegraphics[width=\textwidth]{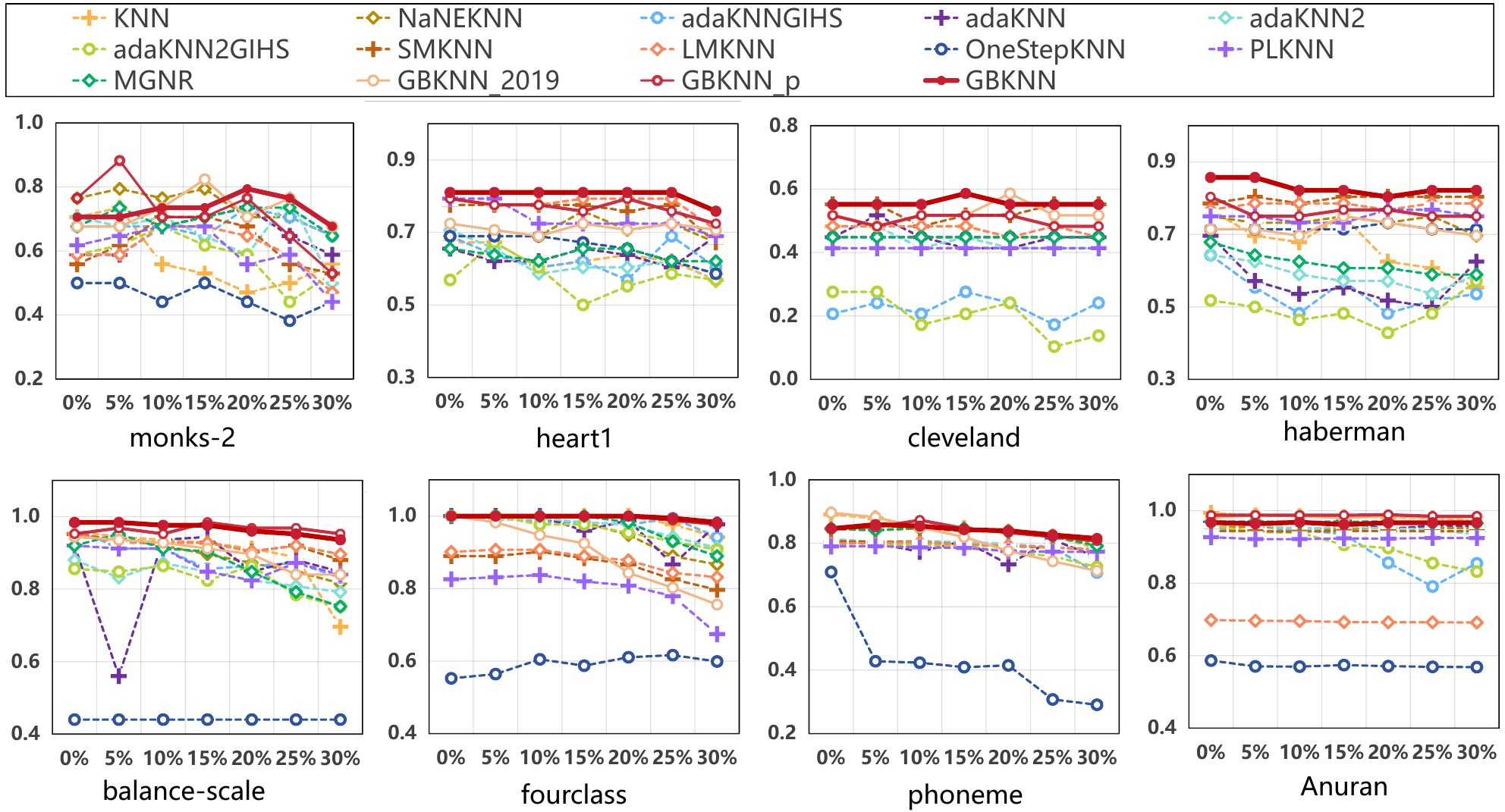}
        \label{fig:image2}
    \end{subfigure}
    \vspace{0.3em} 
    \begin{subfigure}[b]{0.8\textwidth}
        \centering
        \includegraphics[width=\textwidth]{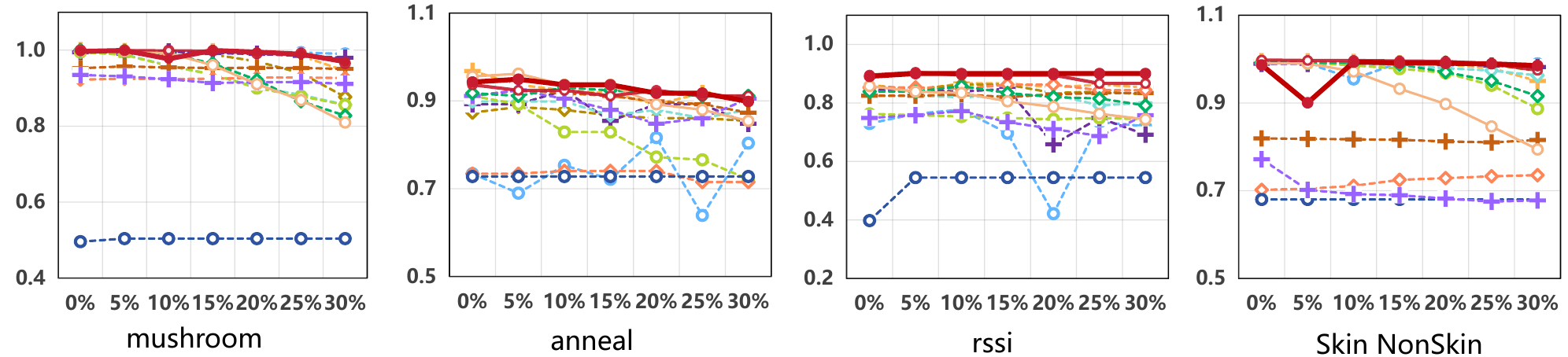}
        \label{fig:image22}
    \end{subfigure}
\caption{\textcolor{black}{Accuracy of different methods on six datasets under different noise conditions.}}  
 \label{acc_noisy}  
\end{figure*}

\begin{table*}[!b]
    \centering
    \centering
    \scriptsize
    \setlength
    \tabcolsep{3.5pt} 
    \caption{Running time (s) of different methods under noise-free conditions.}
    \begin{tabular}{lccccccccccccccc}
        \midrule[1pt]
        Data & monks-2 & heart1 & cleveland & haberman & balance-scale & fourclass & phoneme & Anuran & mushroom & anneal & rssi & Skin Nonskin \\ 
        \midrule
        KNN  & \ul{0.0290} & 0.0630 & 0.0634 & \ul{0.0530} & \textbf{0.1620} & 0.2831 & 9.3909 & 33.9094 & 30.2797 & 0.3682 & 0.3942 & 28.3518 \\ 
        NaNEKNN & 0.0240 & 0.1620 & 0.0810 & 0.0806 & \textbf{0.1620} & 0.2066 & 10.2117 & 48.7547 & 14.8327 & 0.6892 & 0.4001 & 25.5310 \\
        adaKNNGIHS & 0.3621 & 0.6977 & 0.5754 & 0.6663 & 2.4360 & 4.2957 & 149.0276 & 269.1356 & 346.7465 & 3.7422 & 6.4180 & 455.3578 \\ 
        adaKNN & 0.3375 & 0.7073 & 0.5743 & 0.6635 & 2.4263 & 4.3463 & 150.0538 & 271.5230 & 350.7985 & 3.7151 & 6.3709 & 453.9935 \\ 
        adaKNN2 & 0.1422 & 0.3383 & 0.3327 & 0.3172 & 1.2322 & 2.3371 & 74.4621 & 133.2459 & 175.4169 & 1.9179 & 3.2745 & 223.5811 \\
        adaKNN2GIHS & 0.1400 & 0.3355 & 0.3329 & 0.3169 & 1.2068 & 2.1702 & 75.1372 & 132.3127 & 173.2620 & 1.8857 & 3.2012 & 221.6835 \\
        SMKNN & 0.0150 & \textbf{0.0450} & \textbf{0.0236} & \textbf{0.0420} & 0.1950 & 0.5251 & 17.0071 & 32.0827 & 31.3878 & 0.3066 & 0.5468 & 42.4478 \\ 
        LMKNN & \textbf{0.0160} & \ul{0.0470} & \ul{0.0240} & \textbf{0.0420} & 0.2017 & 0.6069 & 17.4207 & 34.4481 & 32.1668 & 0.3418 & 0.6053 & 44.9805 \\ 
        OneStepKNN & 0.1097 & 0.1210 & 0.0581 & 0.0702 & 0.2921 & 0.4803 & 70.4204 & 156.3371 & 224.2986 & 0.3792 & 0.9113 & 334.7534 \\ 
        PLKNN & 0.0290 & 0.0746 & 0.0360 & 0.0820 & 0.3993 & 0.9231 & 31.0830 & 45.8409 & 55.3659 & 0.4587 & 0.9540 & 79.5953 \\ 
        MGNR & 0.0420 & 0.0630 & 0.0460 & 0.0540 & \ul{0.1513} & 0.1960 & 3.8934 & 7.9492 & 6.5127 & \textbf{0.2326} & \ul{0.2831} & 7.7279 \\ 
        \textcolor{black}{GBKNN\_2019} & \textcolor{black}{3.0944} & \textcolor{black}{4.7226} & \textcolor{black}{5.9680} & \textcolor{black}{5.1024} & \textcolor{black}{5.7159} & \textcolor{black}{1.3050} & \textcolor{black}{41.9653} & \textcolor{black}{8.8376} & \textcolor{black}{2.7996} & \textcolor{black}{4.4277} & \textcolor{black}{8.2658} & \textcolor{black}{3.7593} \\ 
        \textcolor{black}{GBKNN\_p} & 0.0990 & 0.1140 & 0.1210 & 0.1066 & 0.2160 & \textbf{0.1860} & \textbf{2.1095} & \textbf{2.0994} & \textbf{2.0175} & \ul{0.2560} & \textbf{0.2826} & \textbf{2.8242} \\ 
        \textcolor{black}{GBKNN} & \textcolor{black}{0.1130} & \textcolor{black}{0.1370} & \textcolor{black}{0.1320} & \textcolor{black}{0.1270} & \textcolor{black}{0.2560} & \textcolor{black}{\ul{0.2310}} & \textcolor{black}{\ul{2.4616}} & \textcolor{black}{\ul{4.3782}} & \textcolor{black}{\ul{2.4396}} & \textcolor{black}{0.3676} & \textcolor{black}{0.3490} & \textcolor{black}{\ul{3.2114}} \\ 
        \midrule
    \end{tabular}
    \label{time} 
\end{table*}

\subsection{Runtime Analysis}
 
\textcolor{black}{As shown in Table~\ref{time}, the runtime comparison of various algorithms under noise-free conditions is conducted on 12 datasets. Overall, traditional KNN demonstrates relatively high computational efficiency on some small-scale datasets, such as `heart1', `cleveland', and `haberman', where it achieves relatively short runtimes. Meanwhile, several improved KNN-based methods also exhibit certain efficiency advantages on some datasets. In contrast, GBKNN\_p and GBKNN incur slightly higher runtimes on small-scale datasets, mainly due to the additional granular ball generation process before classification, which introduces extra preprocessing overhead. 
However, as the data scale increases, the efficiency advantage of granular ball-based methods becomes more evident. On datasets such as `phoneme', `Anuran', `mushroom', `anneal', and `Skin NonSkin', both GBKNN\_p and GBKNN achieve significantly lower runtimes than most competing algorithms. For example, on the `anneal' dataset, the runtime of GBKNN is only 0.3576, which is much faster than 4.2327 for GBKNN\_2019 and 30.2779 for the traditional KNN. Similarly, on `phoneme', `mushroom', and `Skin NonSkin', GBKNN achieves runtimes of 2.4616, 2.4396, and 3.2114, respectively, demonstrating strong computational efficiency. Overall, the granular ball-based representation effectively reduces the computational cost caused by sample-wise operations, leading to better scalability on medium and large scale datasets.}

\textcolor{black}{Further comparison among GBKNN\_2019, GBKNN\_p, and GBKNN shows that GBKNN\_p is generally slightly faster than GBKNN, while GBKNN consistently outperforms GBKNN\_2019 in terms of runtime. This difference mainly stems from their distinct granular ball generation and decision mechanisms. Specifically, GBKNN\_2019 improves efficiency by replacing sample points with granular balls, but its splitting strategy is relatively inefficient. GBKNN\_p adopts a coarse division and controls granular ball splitting based on purity, and directly performs classification using the nearest granular ball, resulting in a simpler computational process. In contrast, GBKNN employs the Fisher criterion to guide granular ball splitting and stopping, enabling more refined partitioning. In the prediction stage, it constructs an adaptive neighborhood centered at the test sample based on the nearest granular ball rather than directly using it for classification. Although this neighborhood construction introduces additional computational cost, it also enhances classification performance.}

\textcolor{black}{In summary, the results in Table III indicate that both GBKNN\_p and GBKNN achieve favorable time efficiency while maintaining high classification performance. GBKNN\_p is more efficient in terms of computational speed, whereas GBKNN achieves a better balance between efficiency and discriminative power, demonstrating strong practical value.}

\subsection{\textcolor{black}{Results of large-scale datasets}}

\textcolor{black}{Furthermore, to further validate the applicability of the proposed method in large-scale scenarios, experiments are conducted on five large-scale datasets, including `Santander', `creditcard', `covertype', `Higgs', and `supersymmetry', and the results are reported in the table. Due to the large sample sizes and high computational cost of these datasets, some competing algorithms encounter issues such as excessive memory consumption or prohibitively long runtimes, making it difficult to complete the experiments within acceptable resource and time constraints. Therefore, complete comparison results are not available on these datasets. In this setting, the analysis primarily focuses on the classification performance and robustness of GBKNN on large-scale datasets.}

\begin{table}[!h]
\centering
 \centering
 \scriptsize
 \setlength
    \tabcolsep{3.5pt} 
\caption{\textcolor{black}{Accuracy and time performance of GBKNN on large-scale datasets at different noise levels.}}
  {\color{black}
\begin{tabular}{llllll}
\midrule[1pt]
Data    & Santander  & creditcard & covertype  & Higgs      & supersymmetry \\ \hline
0\%     & 0.8979     & 0.9991     & 0.8183     & 0.5117     & 0.6973        \\
5\%     & 0.8979     & 0.9989     & 0.8179     & 0.5139     & 0.6948        \\
10\%    & 0.8979     & 0.9990     & 0.8170     & 0.5119     & 0.6920        \\
15\%    & 0.8979     & 0.9991     & 0.8126     & 0.5144     & 0.6932        \\
20\%    & 0.8979     & 0.9987     & 0.8153     & 0.5122     & 0.6926        \\
25\%    & 0.8979     & 0.9986     & 0.8053     & 0.5220     & 0.6921        \\
30\%    & 0.8979     & 0.9980     & 0.7973     & 0.5197     & 0.6857        \\ \hline
Time(s) & 4183.4577 & 7367.7777  & 11644.1096 & 17343.2118 & 17271.1339    \\ \hline
\end{tabular}}
 \label{bigdata}
\end{table}

\textcolor{black}{As shown in Table~\ref{bigdata}, GBKNN maintains relatively stable performance under different noise levels. For example, on the `Santander' dataset, the accuracy remains at 0.8979 across the noise range from 0\% to 30\%; on the `creditcard' dataset, the accuracy consistently stays above 0.9980; and on the `Higgs' dataset, the accuracy remains stable within the range of 0.5117–0.5220 with only minor fluctuations. For the `covertype' and `supersymmetry' datasets, although accuracy declines as noise level increases, it remains stable overall.
These results indicate that GBKNN retains strong robustness and applicability when dealing with large-scale and high-complexity data. Taken together with the average runtime of all noise observations, it can be seen that although the algorithm still requires a certain amount of computation time for ultra-large datasets, it can complete the training and prediction processes under limited computational resources, demonstrating good scalability.
Overall, these findings further verify the effectiveness of GBKNN in large-scale data scenarios and highlight the advantage of granular ball representation in reducing computational burden and improving algorithm feasibility for large datasets.}

\subsection{Ablation Studies}


\begin{table*}[!h]
    \centering
    \centering
    \scriptsize
    \setlength
    \tabcolsep{1.1pt} 
    \caption{\textcolor{black}{Comparison of classification accuracy of GBKNN under different key modules.}}
    {\color{black}
    \begin{tabular}{cccccccccccccccccccc}
        \midrule[1pt]
$\sqrt{n}$    &  Purity                    & fisher                    & minball      & minball\_$W_d$      & $T_L$        & Neighborhood     & monks-2         & heart1          & cleveland       & haberman        & balance-scale   & fourclass       & phoneme         & Anuran          & mushroom        & anneal          & rssi            & Skin Nonskin    & Ave.            \\ \hline
\Checkmark &   \Checkmark &                           & \Checkmark &  & \Checkmark   &                       & 0.4832          & 0.6798          & 0.4483          & 0.7423          & 0.8697          & 0.9759          & 0.8180          & \textbf{0.9798} & 0.9916 & 0.8915          & 0.8429          & \textbf{0.9880} & 0.8093          \\
\Checkmark &  \Checkmark &       &               & \Checkmark   & \Checkmark    &        & 0.4916          & 0.6847          & 0.4581          & 0.7372          & 0.8697          & 0.9759          & 0.8182          & 0.9797 & \textbf{0.9917} & 0.8870          & 0.8470          & 0.9877 & 0.8107          \\
 \Checkmark  & &  \Checkmark &  &  \Checkmark   & \Checkmark  &    & 0.5             & 0.7266          & \textbf{0.4778} & 0.7066          & 0.8754          & 0.9801          & 0.8103          & 0.9780          & 0.9897          & 0.8915          & 0.8490          & 0.9877 & 0.8144          \\
   & & \Checkmark    & & \Checkmark   & \Checkmark    & \Checkmark        & 0.6933          &  0.6059 & 0.4532          & 0.6301 & 0.8229 & 0.9352 & 0.8057 & 0.9666          & 0.9607          & \textbf{0.9105}          & 0.7949 & 0.9211          & 0.7917 \\ 
     \Checkmark & & \Checkmark    & & \Checkmark   &     & \Checkmark        & 0.5294          & \textbf{0.7463}    & 0.4532          & 0.7321   & 0.9234        & 0.9784     & 0.8248     & 0.9589          & 0.9829          & 0.8716          & 0.8558     & 0.9828          & 0.8200 \\   
 \Checkmark   & & \Checkmark    & & \Checkmark   & \Checkmark    & \Checkmark        & 0.5336     & 0.7340      & 0.4581    & \textbf{0.7526}       & \textbf{0.9246}       & \textbf{0.9809}       & \textbf{0.8263}       & 0.9436          & 0.9863          & 0.8761          & \textbf{0.8626} & 0.9820          & \textbf{0.8217} \\ \hline
    \end{tabular}}
    \label{example} 
\end{table*}

\textcolor{black}{As shown in Table~\ref{example}, all key designs improve the final performance, and the complete model integrating $\sqrt{n}$-based coarse initialization, the Fisher criterion, the lower bound $T_L$ of quality, weighted boundary distance $W_d$, and neighborhood decision achieves the best average result of 0.8217. To ensure a fair comparison in the ablation study, we fixed the $\sqrt{n}$ initial cluster centers and analyzed the effect of different designs on model performance based on this setting.}
Subsequently, according to the proportion of different class samples, the farthest sample from the already selected initial points was iteratively chosen until $\sqrt{n}$ initial points were selected. By prioritizing the selection of majority-class samples and incorporating a farthest-point strategy based on global distribution, the initial \textcolor{black}{granular balls} were ensured to cover the entire data space, leading to a more balanced \textcolor{black}{granular ball} division and reducing initial partitioning imbalance.

\textcolor{black}{In the ablation study on the $\sqrt{n}$-based coarse division, we compared the traditional granular ball initialization method, which starts the iterative process from the whole dataset, with the $\sqrt{n}$-based coarse initialization strategy. The results are shown in Table~\ref{example} (Rows 4 and 6). It can be seen that introducing coarse division leads to higher classification accuracy on most datasets, with improvements exceeding 5\%–10\% on some datasets. The reason is that dividing the whole dataset into $\sqrt{n}$ coarse balls provides a simpler initial representation for the original complex distribution, which helps reduce the effect of nonlinear data distributions and avoids premature convergence to overly large granular balls, thereby providing a more reasonable basis for adaptive granular ball generation.}

\textcolor{black}{The ablation experiment results for Purity and Fisher control of granular ball generation are shown in Table ~\ref{example} (Rows 2 and 3). The results show that replacing purity with the Fisher criterion as the control criterion for granular ball generation further improves the average performance of the model by 3.7\%. This indicates that, compared with purity, which only considers the proportion of the dominant class, the Fisher criterion can simultaneously capture inter-class separability and intra-class compactness within a granular ball, thus enabling more accurate evaluation of ball quality and more effective guidance for adaptive granular ball division.}

\textcolor{black}{As shown in Table~\ref{example} (Rows 5 and 6), after introducing the quality lower-bound mechanism (i.e., $T_L$), the classification accuracy improves on most datasets. In particular, the accuracy increases by 2.05\% and 1.32\% on the `haberman' and `rssi' datasets, respectively. This mechanism further refines those granular balls that do not satisfy the splitting condition but still have relatively low quality, thereby effectively improving the classification accuracy of GBKNN across different datasets.} 

\textcolor{black}{The effectiveness of the weighted boundary distance can be verified by comparing Rows 1 and 2 in Table~\ref{example}. Under the same purity-based granular ball generation criterion, the average performance improves from 0.8093 to 0.8107. Unlike the unweighted version, which treats all granular balls equally during prediction, the weighted boundary distance reduces the influence of small noisy balls and enhances the role of larger balls in nearest-ball search. Since larger granular balls usually contain more representative samples and are less likely to be dominated by local noise, this mechanism improves the reliability of nearest-ball selection and thus enhances the robustness of the model in the presence of local noise and outliers.}

\textcolor{black}{As shown in Rows 3 and 6 of Table~\ref{example}, the decision strategy based on constructing a neighborhood around the test point outperforms direct decision making based on the nearest granular ball. Since this neighborhood is induced by the nearest granular ball, and its radius is defined as the distance from the test sample to the farthest sample within that ball, it can fully cover the reliable local sample group while remaining compact. More importantly, this mechanism allows $k$ to be adaptively determined by the actual number of samples contained in the neighborhood, rather than being manually preset, which is more consistent with the basic KNN principle of making decisions based on local neighborhoods.}

\section{Conclusion}\label{sec6}

\textcolor{black}{This paper proposes a new adaptive granular-ball k-nearest neighbor model. The method fits the original data distribution by generating granular balls. It first simplifies local splitting through $\sqrt{n}$ coarse divisions, and then achieves full adaptivity by controlling the splitting process with the weighted sum of the Fisher values of child balls. During coarse-granular ball partitioning, a density-based splitting criterion is introduced, and the $\sqrt{n}$ divisions strategy is optimized through rigorous mathematical analysis. In the decision stage, GBKNN constructs an adaptive neighborhood for each test sample based on the weighted nearest granular ball, so that the value of $k$ is dynamically determined by the actual number of samples in the neighborhood, avoiding the dependence of traditional KNN on a fixed parameter setting. Extensive experiments on multiple benchmark datasets show that GBKNN outperforms traditional KNN and other competing methods in classification accuracy, time efficiency, and noise robustness, demonstrating strong overall advantages. In future work, we will further explore more efficient granular ball partition strategies to continuously improve classification performance. We will also extend GBKNN to broader application scenarios, such as imbalanced classification, open-set recognition, and semi-supervised learning.}


%

\bibliographystyle{IEEEtran}
\bibliography{reference}

\end{document}